\newcommand\bibstyle@comma{\bibpunct(),a,,}
\newcommand\bibstyle@semicolon{\bibpunct();a,,}
\pretocmd\citet{\citestyle{comma}}\relax\relax
\pretocmd\Citet{\citestyle{comma}}\relax\relax
\pretocmd\citep{\citestyle{semicolon}}\relax\relax
\pretocmd\Citep{\citestyle{semicolon}}\relax\relax
\newtheorem{mythm}{Theorem}
\numberwithin{mythm}{section}
\newtheorem{mylem}[mythm]{Lemma}
\newtheorem{myprop}[mythm]{Proposition}
\newtheorem{mykor}[mythm]{Corollary}
\theoremstyle{definition}
\newtheorem{mydef}[mythm]{Definition}
\newtheorem{mybsp}[mythm]{Example}
\newtheorem{myann}[mythm]{Assumption}
\theoremstyle{remark}
\newtheorem{mybem}[mythm]{Remark}
\numberwithin{figure}{section}
\crefname{mythm}{theorem}{theorems}
\crefname{mylem}{lemma}{lemmas}
\crefname{myprop}{proposition}{propositions}
\newcommand{\iid}{\text{i.i.d.}}
\newcommand{\ie}{\enm{\text{i.e.\@}}}
\newcommand{\vgl}{\enm{\text{cf.\@}}}
\newcommand{\seite}{\enm{\text{p.\@}}}
\newcommand{\seiten}{\enm{\text{pp.\@}}}
\newcommand{\eg}{\enm{\text{e.g.\@}}}
\newcommand{\wenn}{\text{, if }}
\newcommand{\sonst}{\text{, else}}
\newcommand{\fs}{\enm{\text{a.s.\@}}}
\newcommand{\enm}[1]{\ensuremath{#1}\xspace}
\newcommand{\limn}{\enm{\lim_{n\to\infty}}}
\newcommand{\diff}{\enm{\,\mathrm{d}}}
\newcommand{\R}{\enm{\mathbb{R}}}
\newcommand{\N}{\enm{\mathbb{N}}}
\newcommand{\eps}{\enm{\varepsilon}}
\newcommand{\lb}{\enm{\lambda}}
\newcommand{\lbn}{\enm{\lb_n}}
\renewcommand{\P}{\enm{\textnormal{P}}}
\newcommand{\Px}{\enm{\P_X}}
\newcommand{\Pbed}[2][\cdot]{\enm{\P(#1\,|\,#2)}} 
\newcommand{\D}{\enm{D}} 
\newcommand{\DVert}{\enm{\textnormal{D}}}
\newcommand{\Dn}{\enm{\D_n}} 
\newcommand{\DVertn}{\enm{\DVert_n}}
\newcommand{\ew}[2][]{\enm{\mathbb{E}_{#1}\left[#2\right]}}
\newcommand{\X}{\enm{\mathcal{X}}}
\newcommand{\Y}{\enm{\mathcal{Y}}}
\newcommand{\XX}{\enm{\X\times\X}}
\newcommand{\XY}{\enm{\X\times\Y}}
\newcommand{\XYR}{\enm{\X\times\Y\times\R}}
\newcommand{\YR}{\enm{\Y\times\R}}
\newcommand{\MXY}{\enm{\mathcal{M}_1(\XY)}}
\newcommand{\BX}{\enm{\mathcal{B}_{\X}}}
\newcommand{\BY}{\enm{\mathcal{B}_{\Y}}}
\newcommand{\BXY}{\enm{\mathcal{B}_{\XY}}}
\renewcommand{\L}[2][]{\enm{L_{#2}\ifthenelse{\isempty{#1}}{}{(#1)}}}
\newcommand{\Lp}{\L{p}}
\newcommand{\La}{\L{1}}
\newcommand{\Lppx}{\enm{\Lp(\Px)}} 
\newcommand{\Lapx}{\enm{\La(\Px)}} 
\newcommand{\sobolevX}[1][m,q]{\enm{W^{#1}(\X)}}
\newcommand{\norm}[2]{\enm{\left|\left|#2\right|\right|_{#1}}}
\newcommand{\normSup}[1]{\norm{\infty}{#1}}
\newcommand{\normH}[1]{\norm{\H}{#1}}
\newcommand{\normLppx}[1]{\norm{\Lppx}{#1}} 
\newcommand{\normLapx}[1]{\norm{\Lapx}{#1}} 
\renewcommand{\H}{\enm{H}} 
\renewcommand{\k}{\enm{k}}
\newcommand{\loss}{\enm{L}}
\newcommand{\lossshift}{\enm{\loss^\star}}
\newcommand{\losspin}{\enm{\loss_{\tau\text{-pin}}}}
\newcommand{\losspinshift}{\enm{\losspin^\star}}
\newcommand{\risk}[1][\loss,\P]{\enm{\mathcal{R}_{#1}}} 
\newcommand{\riskempn}[1][\loss,\DVertn]{\enm{\mathcal{R}_{#1}}} 
\newcommand{\riskshift}[1][\lossshift,\P]{\enm{\mathcal{R}_{#1}}} 
\newcommand{\riskbayes}[1][\loss,\P]{\enm{\mathcal{R}_{#1}^*}}
\newcommand{\riskshiftbayes}[1][\lossshift,\P]{\enm{\mathcal{R}_{#1}^*}} 
\newcommand{\riskoptH}[1][\loss,\P,\H]{\enm{\mathcal{R}_{#1}^*}} 
\newcommand{\innerrisk}[1][\loss,\Q]{\enm{\mathcal{C}_{#1}}} 
\newcommand{\innerriskshiftbed}[1][x]{\innerrisk[\lossshift,\Pbed{#1}]} 
\newcommand{\fbayes}{\enm{f_{\loss,\P}^*}}
\newcommand{\fshiftbayes}{\enm{f_{\lossshift,\P}^*}}
\newcommand{\ftaubayes}{\enm{f_{\tau,\P}^*}} 
\newcommand{\ftaubayeswithshiftloss}{\enm{f_{\losspinshift,\P}^*}} 
\newcommand{\fn}{\enm{f_n}}
\newcommand{\fstar}{\enm{f^*}}
\newcommand{\ftilde}[1][]{\ifthenelse{\isempty{#1}}{\enm{\tilde{f}}}{\enm{\tilde{f}_{#1}}}}
\newcommand{\ftheo}{\enm{f_{\loss,\P,\lb}}} 
\newcommand{\ftheon}{\enm{f_{\loss,\P,\lbn}}} 
\newcommand{\fempn}{\enm{f_{\loss,\DVert_n,\lb}}} 
\newcommand{\fempnn}{\enm{f_{\loss,\DVert_n,\lbn}}} 
\newcommand{\fshifttheo}{\enm{f_{\lossshift,\P,\lb}}} 
\newcommand{\ftaushiftempn}{\enm{f_{\losspinshift,\DVertn,\lbn}}} 
\newcommand{\unif}[1][\enm{0,1}]{\enm{\mathcal{U}(#1)}}
\newcommand{\dirac}[1][x]{\enm{\delta_{#1}}}
\newcommand{\bigcupdot}{\charfusion[\mathop]{\bigcup}{\cdot}}
\def\moverlay{\mathpalette\mov@rlay}
\def\mov@rlay#1#2{\leavevmode\vtop{%
		\baselineskip\z@skip \lineskiplimit-\maxdimen
		\ialign{\hfil$\m@th#1##$\hfil\cr#2\crcr}}}
\newcommand{\charfusion}[3][\mathord]{
	#1{\ifx#1\mathop\vphantom{#2}\fi
		\mathpalette\mov@rlay{#2\cr#3}
	}
	\ifx#1\mathop\expandafter\displaylimits\fi}
\title{\textbf{On the Connection between Lp- and Risk Consistency and its Implications on Regularized Kernel Methods}}
\date{March 27, 2023}
\author{\textbf{Hannes K\"ohler}\thanks{Email: \href{mailto:hannes.koehler@uni-bayreuth.de}{\texttt{hannes.koehler@uni-bayreuth.de}}}\\
		Department of Mathematics, University of Bayreuth, Germany
	}
\begin{document}
\maketitle

\begin{abstract}
	As a predictor's quality is often assessed by means of its risk, it is natural to regard risk consistency as a desirable property of learning methods, and many such methods have indeed been shown to be risk consistent. The first aim of this paper is to establish the close connection between risk consistency and \Lp-consistency for a considerably wider class of loss functions than has been done before. The attempt to transfer this connection to shifted loss functions surprisingly reveals that this shift does not reduce the assumptions needed on the underlying probability measure to the same extent as it does for many other results. The results are applied to regularized kernel methods such as support vector machines.
	
	\vspace*{1ex}\noindent\textbf{Keywords:} machine learning, consistency, regression, kernel methods, support vector machines
\end{abstract}

\section{Introduction}\label{Sec:Consistency_Intro}

The goal of non-parametric statistical machine learning is to predict an output random variable $Y$ based on an input random variable $X$ with (almost) no prior knowledge about the distribution \P of $(X,Y)$ on some space \XY, all information about \P typically stemming from a data set $\Dn:= ((x_1,y_1),\dots,(x_n,y_n))\in (\XY)^n$ consisting of independent and identically distributed (\iid) observations sampled from \P. More specifically, one aims at finding a measurable function $f\colon\X\to\Y$ which captures certain characteristics of the conditional distribution \Pbed{X}, like its conditional mean function or conditional quantile function. 

Such learning tasks can often be formalized by aiming at finding a measurable function that minimizes the \textit{\loss-risk} (or just \textit{risk}) 
\begin{align*}
	\risk(f) := \ew{\loss(X,Y,f(X))}\,
\end{align*}
for a suitable \textit{loss function}, which is a measurable function $\loss\colon\XYR\to[0,\infty)$. Here, $\loss(x,y,f(x))$ quantifies the cost of the prediction $f(x)$ if the observed true output belonging to $x$ is $y$. Hence, the choice of \loss controls how different deviations between $y$ and $f(x)$ are penalized and specifies the exact goal of the prediction, and the risk assesses the quality of the whole predictor $f$ with respect to the whole distribution \P. For example, the two aforementioned goals of finding the conditional means (least squares regression) and conditional quantiles (quantile regression) can be approached by using the least squares loss and the pinball loss respectively, as it is known that the according risks are minimized by the respective target functions one aims to estimate.

To this end, we define the \textit{Bayes risk} \riskbayes as usual as the smallest possible risk, that is,
\begin{align*}
	\riskbayes := \inf\{\risk(f) \,|\, f\colon\X\to\R \text{ measurable}\}\,,
\end{align*}
and call a measurable function \fbayes achieving $\risk(\fbayes)=\riskbayes$ a \textit{Bayes function}. Further assume that a learning method yields the predictor $\fn$ based on the data set $\Dn$, $n\in\N$.

Because of the risk assessing a predictor's quality, a desirable property for the learning method is \textit{risk consistency}, \ie that
\begin{align*}
	\risk(\fn) \to \riskbayes\,,\qquad n\to\infty\,,
\end{align*}
in probability. As this is a very natural type of consistency to consider, results on risk consistency exist for many such learning methods, see for example \citet{steinwart2005} (regularized kernel methods for classification), \citet{zhang2005} (boosting), \citet{christmann2007} (regularized kernel methods for regression; see also \Cref{Sec:Consistency_SVMs}), \citet{biau2008} (averaging classifiers such as random forests), \citet{lin2022} (deep convolutional neural networks). 

We are however also interested in taking a look at a different type of consistency, namely \textit{\Lp-consistency}, \ie that
\begin{align*}
	\normLppx{\fn-\fbayes} \to 0\,,\qquad n\to\infty\,,
\end{align*}
in probability for some $p\in[1,\infty)$, as this compares the functions themselves instead of their risks.

We show in \Cref{Sec:Consistency_Consistency} that \Lp- and risk consistency are actually equivalent under rather mild assumptions. Here, the more surprising part certainly is risk consistency implying \Lp-consistency as the latter tackles the generally more demanding task of system identification instead of only system imitation, as it is described by \citet[Section~2.1.1]{cherkassky2007}, see also \citet[Section~1.4]{gyorfi2002} for the classification case. Whereas this implication had already been established for certain special loss functions (which we briefly recap in \Cref{SubSec:Consistency_Consistency_Reg}), we considerably generalize it to a large class of loss functions including those as special cases. Additionally, we examine whether it is possible to transfer these results to risks that are based on shifted loss functions---which are useful for working with heavy-tailed distributions---and stumble upon some difficulties when trying to do this in all generality, which is somewhat surprising considering that many other results can be transferred to shifted loss functions quite seamlessly. Before successfully transferring our results by imposing some assumptions on the underlying distribution, we therefore also derive some interesting negative results. Lastly, in \Cref{Sec:Consistency_SVMs}, the consistency results are applied to regularized kernel methods, in which the predictors are defined as minimizers of regularized risks. Because of this definition, it is natural to examine their risk consistency and this has already been well investigated in the past, but there did not exist any general results on their \Lp-consistency so far.

We wish to emphasize that our goal is not to derive learning rates for any learning method (like for example for the regularized kernel methods from \Cref{Sec:Consistency_SVMs}). Instead, we aim at deriving results on consistency under minimal assumptions on the underlying probability distribution---much weaker assumptions than those needed for deriving learning rates---and such that the results are applicable to general learning methods in a general setting.

\section{Prerequisites}\label{Sec:Consistency_Pre}

Before presenting our results, we first need to state some additional prerequisites: As mentioned in the introduction, we aim at estimating certain properties of the unknown conditional distribution \Pbed{X} such as the conditional mean or conditional quantiles. This conditional distribution \Pbed{X} uniquely exists, and \P can therefore be split into a marginal distribution \Px on \X and this conditional distribution, whenever \Y is a Polish space \citep[\vgl][Theorems 10.2.1 and 10.2.2]{dudley2004}, for example if $\Y\subseteq\R$ closed \citep[\vgl][\seite 157]{bauer2001}. Hence, by choosing \Y in such a way, we are guaranteed to always be able to perform this factorization of \P, which leads us to one part of the following standard and rather general assumption which we assume to hold true throughout this paper.

\begin{myann}\label{Ann:Consistency_Pre_AllgAnn}
	Let \X be a complete separable metric space and let $\Y\subseteq\R$ be closed. Let \X and \Y be equipped with their respective Borel $\sigma$-algebras \BX and \BY. Let $\P\in\MXY$, where \MXY denotes the set of all Borel probability measures on the measurable space $(\XY,\BXY)$.
\end{myann}

We are mainly interested in continuous and in convex loss functions, by which we mean continuity respectively convexity of \loss in its third argument. Furthermore, the loss functions will be assumed to additionally be {distance-based}. Distance-based losses are a special type of loss functions which are typically used in regression tasks,
and which are defined in the following way:

\begin{mydef}\label{Def:Consistency_Pre_DistBasedLoss}
	A loss function $\loss\colon\XYR\to[0,\infty)$ is called \textit{distance-based} if there exists a representing function $\psi\colon \R\to[0,\infty)$ satisfying $\psi(0)=0$ and $\loss(x,y,t)=\psi(y-t)$ for all $(x,y,t)\in\XYR$. If $\psi(r)=\psi(-r)$ for all $r\in\R$, then \loss is called \textit{symmetric}.
	\\
	Let $p\in(0,\infty)$. A distance-based loss $\loss\colon\XYR\to[0,\infty)$ with representing function $\psi$ is of
	\begin{enumerate}[label=(\roman*)]
		\item \textit{upper growth type} $p$ if there is a constant $c>0$ such that $\psi(r) \le c\, (|r|^p+1)$ for all $r\in\R$.
		\item \textit{lower growth type} $p$ if there is a constant $c>0$ such that $\psi(r) \ge c\,|r|^p - 1$ for all $r\in\R$.
		\item \textit{growth type} $p$ if \loss is of both upper and lower growth type $p$.
	\end{enumerate}
\end{mydef}

Since the first argument does not matter in distance-based loss functions, we often ignore it and write $\loss\colon\YR\to[0,\infty)$ and $\loss(y,t)$ instead.

Distance-based losses are typically used in regression tasks, but some of them, like the least squares loss, are also popular choices for classification tasks, see for example \citet[Section~1.4]{gyorfi2002}. As an example of a distance-based loss, the mentioned least squares loss is of growth type 2 whereas many other common loss functions for regression tasks, like the {pinball loss}, {Huber loss} or {\eps-insensitive loss}, are of growth type 1. We will later see that this sometimes leads to slightly more restrictive conditions regarding \P when using the least squares loss.

More specifically, it is for some results required that the \textit{averaged $p$-th moment} of \P, with $p$ being the loss function's growth type, is finite. This averaged $p$-th moment is defined as
\begin{align*}
	|\P|_p := \left(\int_{\XY} |y|^p \diff\P(x,y)\right)^{1/p} &= \left(\int_{\X}\int_{Y} |y|^p \diff\Pbed[y]{x}\diff\Px(x)\right)^{1/p}\,.
\end{align*}

\section{Connection between Lp- and Risk Consistency}\label{Sec:Consistency_Consistency}

In \Cref{SubSec:Consistency_Consistency_Reg}, we show that \Lp- and risk consistency are equivalent under certain conditions. \Cref{SubSec:Consistency_Consistency_Shift} contains the rather surprising result that some of these results can not be transferred to risks that are based on shifted loss functions in the generality we would have hoped for, but we also introduce some additional conditions under which it is possible to transfer the results after all.

\begin{mybem}\label{Bem:Consistency_Consistency_BayesUnique}
	We will often write ``\textit{the} Bayes function", implying there exists exactly one such measurable function minimizing \risk. This does not always hold true and is not necessary for risk consistency (neither existence nor uniqueness). We however assume that the Bayes function indeed exists and is \Px-almost-surely (\fs) unique whenever we investigate the difference between some predictor and the Bayes function directly (\eg in the results on \Lp-consistency) instead of the difference between the according risks.
\end{mybem}

\subsection{Connection between Lp- and risk consistency for regular loss functions}\label{SubSec:Consistency_Consistency_Reg}

So far, there are no general results on \Lp-consistency following from risk consistency, but only results regarding special loss functions: For the least squares loss, it has been known for many years that a function's excess risk, \ie the difference between its risk and the Bayes risk, corresponds to the squared $\L{2}(\Px)$-norm of its deviation from the Bayes function, and risk consistency therefore implies $\L{2}$-consistency, \vgl \citet[Proposition 1]{cucker2001} or \citet[\seiten 26--28]{cherkassky2007}. Recently, this $\L{2}$-difference between a function and the Bayes function has also been bounded by the excess risk---by means of so-called comparison or self-calibration inequalities---in case of the asymmetric least squares loss by \citet{farooq2019} and in case of more general strongly convex loss functions under additional assumptions by \citet{sheng2020}. Additionally, \citet{hable2014} showed that \La-consistency follows from risk consistency in case of the pinball loss, and \citet{steinwart2011,xiang2012} derived self-calibration inequalities for this loss under additional assumptions. \citet{tong2019} did so for the \eps-insensitive loss. 

The following lemma generalizes the aforementioned special cases to general convex, distance-based loss functions:

\begin{mythm}\label{Thm:Consistency_Consistency_RiskLp}
	Let $\loss\colon\YR\to[0,\infty)$ be a convex, distance-based loss function of lower growth type $p\in[1,\infty)$. Assume that $\fbayes$ is \Px-\fs unique, $\fbayes\in \Lppx$ and $\riskbayes<\infty$. Then, for every sequence $(\fn)_{n\in\N}\subseteq \Lppx$, we have
	\begin{equation*}
		\limn \risk(\fn) = \riskbayes\qquad \Rightarrow\qquad  \limn ||\fn - \fbayes||_{\Lppx} = 0\,.
	\end{equation*}
\end{mythm}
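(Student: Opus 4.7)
The plan is to argue by contradiction. Suppose $\normLppx{\fn - \fbayes}$ does not converge to $0$; then, after passing to a subsequence (still indexed by $n$), I may assume $\normLppx{\fn - \fbayes} \ge \eps$ for some $\eps > 0$ and all $n$. I would write the excess risk via inner risks as
\begin{equation*}
	\risk(\fn) - \riskbayes \;=\; \int_{\X}\bigl[\innerrisk[\loss,\Pbed{x}](\fn(x)) - \innerrisk[\loss,\Pbed{x}](\fbayes(x))\bigr]\diff\Px(x),
\end{equation*}
where the integrand is non-negative. Since the left-hand side tends to $0$, these integrands converge to $0$ in $\La(\Px)$, and after extracting a further subsequence the pointwise convergence $\innerrisk[\loss,\Pbed{x}](\fn(x)) \to \innerrisk[\loss,\Pbed{x}](\fbayes(x))$ holds at $\Px$-almost every $x$.

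The next step is to deduce pointwise convergence $\fn(x) \to \fbayes(x)$ at $\Px$-a.e.\ $x$. At almost every such $x$ four useful properties hold: (i) $\innerrisk[\loss,\Pbed{x}](\fbayes(x)) < \infty$ because $\riskbayes < \infty$; (ii) $t \mapsto \innerrisk[\loss,\Pbed{x}](t)$ is convex and, via Fatou, lower semicontinuous, since $\psi$ is convex and non-negative; (iii) the lower growth condition $\psi(r) \ge c|r|^p - 1$ combined with $\fbayes(x) \in \R$ gives $\int|y|^p\diff\Pbed{x}(y) < \infty$, hence coercivity $\innerrisk[\loss,\Pbed{x}](t) \to \infty$ as $|t| \to \infty$; and (iv) $\fbayes(x)$ is the unique minimizer of $\innerrisk[\loss,\Pbed{x}]$, for otherwise a measurable selection of a distinct endpoint of the (interval) minimizer set would produce a second Bayes function, contradicting the assumed $\Px$-a.s.\ uniqueness of $\fbayes$. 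A standard sub-level-set compactness argument based on (i)--(iv) forces any sequence $t_n$ with $\innerrisk[\loss,\Pbed{x}](t_n) \to \innerrisk[\loss,\Pbed{x}](\fbayes(x))$ to converge to $\fbayes(x)$, so $\fn(x) \to \fbayes(x)$ almost surely.

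Finally, I would upgrade this pointwise convergence to $\Lppx$-convergence by combining the lower growth condition with the elementary inequality $|a-b|^p \le 2^{p-1}(|a-y|^p + |y-b|^p)$, valid for $p \ge 1$; integrating over $y \sim \Pbed{x}$ and using $|r|^p \le (\psi(r)+1)/c$ gives the data-dependent domination
\begin{equation*}
	|\fn(x) - \fbayes(x)|^p \;\le\; \tfrac{2^{p-1}}{c}\bigl(\innerrisk[\loss,\Pbed{x}](\fn(x)) + \innerrisk[\loss,\Pbed{x}](\fbayes(x)) + 2\bigr) \;=:\; G_n(x).
\end{equation*}
By the previous step, $G_n \to G := \tfrac{2^{p-1}}{c}\bigl(2\innerrisk[\loss,\Pbed{x}](\fbayes(x)) + 2\bigr)$ pointwise $\Px$-a.s., and $\int G_n \diff\Px = \tfrac{2^{p-1}}{c}(\risk(\fn) + \riskbayes + 2) \to \int G\diff\Px < \infty$. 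The generalized dominated convergence theorem (obtained by applying Fatou's lemma to $G_n \pm |\fn-\fbayes|^p$) then gives $\int |\fn - \fbayes|^p \diff\Px \to 0$, contradicting $\normLppx{\fn - \fbayes} \ge \eps$.

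The main obstacle I anticipate is the pointwise step: without any upper growth assumption on $\psi$, the inner risk need not take finite values on all of $\R$, so I must work with lower semicontinuity rather than continuity, and I must carefully translate the assumed $\Px$-a.s.\ uniqueness of the global selection $\fbayes$ into genuine uniqueness of the minimizer of each conditional inner risk $\innerrisk[\loss,\Pbed{x}]$ via a measurable-selection argument on the closed interval of minimizers.
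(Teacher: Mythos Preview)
Your argument is correct and follows the same essential strategy as the paper: both proofs pivot on the domination inequality
\[
|\fn(x)-\fbayes(x)|^p \;\le\; \tfrac{2^{p}}{c}\bigl(\loss(y,\fn(x))+\loss(y,\fbayes(x))+1\bigr)
\]
(you integrate over $y$ first to phrase it via inner risks, with the slightly sharper constant $2^{p-1}$) and then convert convergence of the dominating integrals into $\Lp$-convergence.

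The differences are in packaging rather than in substance. The paper outsources the step ``risk convergence $\Rightarrow$ convergence in $\Px$-measure of $\fn$ to $\fbayes$'' to \citet[Corollary~3.62]{steinwart2008}, whereas you redo this from scratch at the inner-risk level via coercivity, lower semicontinuity, and a measurable-selection argument translating global uniqueness of $\fbayes$ into pointwise uniqueness of the conditional minimizer---this is precisely the content of that corollary, so your proof is self-contained where the paper's is not. Second, the paper passes from convergence of the dominating integrals to $\Lp$-convergence by invoking the Vitali/equi-integrability characterization (\citeauthor{bauer2001}'s Theorem~21.7, which works directly with convergence in measure), while you pass to an a.e.-convergent subsequence and apply the generalized dominated convergence theorem (Pratt's lemma). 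These are equivalent devices here. Your caution about the inner risk possibly taking the value $+\infty$ away from $\fbayes(x)$ in the absence of an upper growth bound is well placed; working with lower semicontinuity and sublevel-set compactness handles this cleanly, and the paper sidesteps the issue only because it works on $\XY$ with the raw non-negative integrands $\loss(y,\fn(x))$ rather than with inner risks.
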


\begin{mybem}\label{Bem:Consistency_Consistency_RiskLpsAlternativbedingungen}
	If \loss is of growth type $p$ instead of only being of \textit{lower} growth type $p$, the conditions $\fbayes\in \Lppx$ and $\riskbayes<\infty$ in \Cref{Thm:Consistency_Consistency_RiskLp} can also be replaced by the perhaps more intuitive and in this case equivalent \textit{moment condition} $|\P|_p<\infty$. This equivalence can easily be obtained from parts (i) and (iii) of \citet[Lemma~2.38]{steinwart2008} by noting that $\riskbayes\le\risk(0)$, with $0$ denoting the zero function, always holds true by definition of the Bayes risk.
\end{mybem}

Notably, \Cref{Thm:Consistency_Consistency_RiskLp} strengthens \citet[Corollary~3.62]{steinwart2008}, 
which stated that risk consistency implies weak consistency.

As mentioned in the introduction, the opposite direction---risk consistency following from \Lp-consistency---is generally the easier one. We formally state this implication in the subsequent \Cref{Thm:Consistency_Consistency_LpRisk}. Hence, this theorem can be seen as the counterpart of \Cref{Thm:Consistency_Consistency_RiskLp}, even though the conditions of the two theorems differ in some details. Notably, the function $f^*$, which the sequence is converging to, does not necessarily need to be the Bayes function \fbayes here:

\begin{mythm}\label{Thm:Consistency_Consistency_LpRisk}
	Let $\loss\colon\YR \to [0,\infty)$ be a continuous, distance-based loss function of upper growth type $p\in[1,\infty)$. Assume that $|\P|_p<\infty$. Then, for every sequence $(f_n)_{n\in\N} \subseteq \Lppx$ and every function $f^*\in\Lppx$, we have
	\begin{align*}
		\limn \normLppx{f_n-f^*} = 0 \qquad \Rightarrow \qquad \limn \risk(f_n) = \risk(f^*)\,.
	\end{align*}
\end{mythm}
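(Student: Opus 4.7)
My plan is to reduce the convergence of the risks to $L_1(\P)$-convergence of the integrands $\psi(Y-f_n(X))$, where $\psi$ is the representing function of \loss. Since \loss is distance-based we may write
\begin{equation*}
\risk(f) \;=\; \int_{\XY} \psi\bigl(y-f(x)\bigr)\,\diff\P(x,y),
\end{equation*}
and continuity of \loss in its third argument is equivalent to continuity of $\psi$. It then suffices to show $\psi(Y-f_n(X)) \to \psi(Y-f^*(X))$ in $L_1(\P)$, because taking expectations gives exactly $\risk(f_n)\to\risk(f^*)$.

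First I would establish convergence in $\P$-probability. Since $\normLppx{f_n-f^*}\to 0$ implies $f_n(X)\to f^*(X)$ in $\Px$-probability, hence in $\P$-probability (as \Px is the $X$-marginal of \P), the continuous mapping theorem applied to the continuous function $(y,t)\mapsto \psi(y-t)$ yields $\psi(Y-f_n(X))\to \psi(Y-f^*(X))$ in $\P$-probability.

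Next I would establish uniform integrability of $\{\psi(Y-f_n(X))\}_{n\in\N}$ under \P. By the upper growth type assumption, $\psi(y-t)\le c(|y-t|^p+1) \le c\bigl(2^{p-1}(|y|^p+|t|^p)+1\bigr)$, so it suffices to show uniform integrability of $\{|f_n(X)|^p\}$ under \P, which by the marginal structure reduces to uniform integrability of $\{|f_n|^p\}$ under \Px (the terms $|Y|^p$ and the constant are $\P$-integrable by the moment assumption $|\P|_p<\infty$). Because $f_n\to f^*$ in $\Lppx$, we have $|f_n|^p\to |f^*|^p$ in $\La(\Px)$ by the reverse triangle inequality for the $\Lp$-norm, and $\La$-convergent sequences are automatically uniformly integrable; hence $\{\psi(Y-f_n(X))\}_{n\in\N}$ is uniformly integrable under \P.

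Combining convergence in probability with uniform integrability, Vitali's convergence theorem yields $\psi(Y-f_n(X))\to\psi(Y-f^*(X))$ in $L_1(\P)$, and integrating gives the claim. A fully equivalent route, should one prefer to avoid Vitali, is the subsequence-subsequence principle: every subsequence of $(f_n)$ has a further subsequence converging to $f^*$ both $\Px$-almost surely and dominated by some $\Lppx$ function; continuity of $\psi$ then gives pointwise \P-almost sure convergence of the integrands, and the growth bound provides an integrable majorant, so Lebesgue's dominated convergence theorem applies. I do not expect any serious obstacle in this argument; the only subtlety worth stating carefully is the passage from $\Lppx$-convergence of $f_n$ to uniform $\P$-integrability of the composed quantities $\psi(Y-f_n(X))$, which is exactly where both the upper growth type and the moment condition $|\P|_p<\infty$ enter.
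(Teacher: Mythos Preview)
Your proposal is correct and matches the paper's proof almost step for step: both deduce $\psi(Y-f_n(X))\to\psi(Y-f^*(X))$ in $\P$-probability via the continuous mapping theorem, use the upper growth bound $\psi(y-t)\le c\bigl(2^p(|y|^p+|t|^p)+1\bigr)$ together with $|\P|_p<\infty$ and equi-integrability of $\{|f_n|^p\}$ (a consequence of $L_p$-convergence) to obtain equi-integrability of the integrands, and then conclude by Vitali. One minor wording issue: the reverse triangle inequality by itself only yields $\int|f_n|^p\to\int|f^*|^p$, not $|f_n|^p\to|f^*|^p$ in $L_1(\Px)$; you need to combine it with the convergence in probability you already have and Scheff\'e/Vitali, or---as the paper does---invoke directly that $L_p$-convergence implies equi-integrability of the $p$-th powers.
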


\subsection{Connection between Lp- and risk consistency for shifted loss functions}\label{SubSec:Consistency_Consistency_Shift}

When looking at \Cref{Thm:Consistency_Consistency_RiskLp}, it is obvious that the assumptions $\fbayes\in\Lppx$ and $\riskbayes<\infty$ are indeed necessary for the theorem's conclusion and that one cannot hope to derive \Lp- from risk consistency without them. Because these assumptions are equivalent to $|\P|_p<\infty$ if \loss is of growth type $p$ (\vgl \Cref{Bem:Consistency_Consistency_RiskLpsAlternativbedingungen}), this however excludes heavy-tailed distributions such as the Cauchy distribution---even for $p=1$. Analogously, \Cref{Thm:Consistency_Consistency_LpRisk} also requires $|\P|_p<\infty$ and can therefore not be applied to such heavy-tailed distributions.

To circumvent this problem, we now try to transfer the results from \Cref{SubSec:Consistency_Consistency_Reg} to \textit{shifted loss functions}, which have been applied in robust statistics for a long time, see for example \citet{huber1967} or \citet[Chapter~3]{huber2009}, and which can be defined in a very easy way: Given a loss function $\loss\colon \XYR\to[0,\infty)$, the associated shifted loss function is
\begin{align*}
	\lossshift\colon&\ \XYR \to \R\,,\,\\
	&(x,y,t)\mapsto \loss(x,y,t)-\loss(x,y,0)\,,
\end{align*}
which can be used to estimate the same quantities as the original loss function since the shift is fixed independently of $t$. Risks can be defined in the same way as for regular loss functions.

\begin{mybem}\label{Bem:Consistency_Consistency_Lipschitz}
	By \citet[Lemma~2.34]{steinwart2008}, a convex and distance-based loss function of upper growth type 1 is always Lipschitz continuous. We call a loss function \loss \textit{Lipschitz continuous} if it is Lipschitz continuous with respect to its last argument, that is, if
	\begin{align*}
		|\loss(x,y,t)-\loss(x,y,t')| \le |\loss|_1 \cdot |t-t'| \qquad \forall\, (x,y)\in\XY\,,\, t,t'\in\R\,,
	\end{align*}
	for some constant $|\loss|_1\ge 0$ which is called the \textit{Lipschitz constant} of \loss.
\end{mybem}

With \Cref{Bem:Consistency_Consistency_Lipschitz} in mind, the risk with respect to the shifted version of a convex and distance-based loss function of upper growth type 1 can be bounded by
\begin{align}\label{eq:Consistency_Consistency_ShiftRiskFinite}
	|\riskshift(f)| \le \int_{\XY} |\loss(y,f(x))-\loss(y,0)|\diff\P(x,y) \le |\loss|_1 \int_\X |f(x)| \diff\Px(x)\,.
\end{align}
Hence, even if $|\P|_1=\infty$, this risk is finite for all $f\in\Lapx$. Using the shifted loss therefore seems like a promising approach for extending the applicability of the results from \Cref{SubSec:Consistency_Consistency_Reg} to heavy-tailed distributions and getting rid of the moment condition $|\P|_1<\infty$ in the case of having a convex loss function of growth type 1. Indeed, \citet{christmann2009} showed that the moment condition can in this case be eliminated from many results regarding regular loss functions by transferring them to shifted loss functions.

When looking at the proof of \Cref{Thm:Consistency_Consistency_RiskLp}, it is however easy to see that \eqref{eq:ProofThm_Consistency_Consistency_RiskLp_LGleichgrInt} does not hold true for shifted loss functions and the proof can thus not be transferred to the situation of this section. The following negative result shows that this is indeed not a failing of the specific proof we used, but that \La-consistency does, somewhat surprisingly, actually not follow from \lossshift-risk consistency in the generality one would have hoped for:

\begin{myprop}\label{Prop:Consistency_Consistency_ShiftGegenbeispielRiskLp}
	Let $\Y=\R$. Let $\loss\colon\YR\to[0,\infty)$ be a convex, distance-based and symmetric loss function of growth type 1, and let \lossshift be its shifted version. Then, even if \fshiftbayes is \Px-\fs unique with $\fshiftbayes\in\Lapx$, a sequence $(\fn)_{n\in\N}\subseteq\Lapx$ of functions satisfying 
	\begin{align*}
		\limn \riskshift(\fn) = \riskshiftbayes
	\end{align*}
	does in general \textbf{not} imply 
	\begin{align*}
		\limn \normLapx{\fn - \fshiftbayes} = 0
	\end{align*}
	without any additional assumptions besides \Cref{Ann:Consistency_Pre_AllgAnn} being imposed.
\end{myprop}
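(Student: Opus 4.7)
The plan is to construct an explicit counterexample exploiting the fact that a shifted Lipschitz loss admits very spread-out conditional distributions while keeping the shifted risk finite. I would take $\X := \N$ equipped with $P_X(\{n\}) := 2^{-n}$, and let the conditional distribution $\Pbed{X=n}$ on $\Y = \R$ be the Cauchy distribution with location $0$ and scale $s_n := n\cdot 2^n$. Since each conditional has a continuous, strictly positive density and $\psi$ is even, convex and---by $\loss$ being of growth type $1$---not affine, the inner risk $t\mapsto \int\psi(y-t)\,\diff\Pbed{X=n}(y)$ is strictly convex with unique minimizer $0$. Hence $\fshiftbayes\equiv 0$ is the unique Bayes function, trivially lies in $\Lapx$, and the Bayes shifted risk equals $0$. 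As candidate sequence I take $\fn := 2^n\cdot\mathds{1}_{\{n\}}\in\Lapx$, for which $\normLapx{\fn - \fshiftbayes} = 2^n\cdot 2^{-n} = 1$ for every $n\in\N$, ruling out $L_1$-consistency.

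It remains to show $\riskshift(\fn)\to 0$. As $\fn$ is supported on $\{n\}$, one has $\riskshift(\fn) = 2^{-n}\,\mathcal{C}_n^*(2^n)$, where $\mathcal{C}_n^*(t) := \ew{\psi(Y-t)-\psi(Y)\,|\,X=n}$ denotes the inner shifted risk at $X=n$. The core of the proof is a quantitative bound of the form $\mathcal{C}_n^*(t) \le C_L\cdot t^2/s_n$ with a constant $C_L$ depending only on $\loss$. To obtain it I would symmetrize: by evenness of $\psi$ and symmetry of $\Pbed{X=n}$, $\mathcal{C}_n^*(t) = \tfrac{1}{2}\ew{\psi(Y-t)+\psi(Y+t)-2\psi(Y)\,|\,X=n}$; convexity of $\psi$ then yields the pointwise inequality $\psi(y-t)+\psi(y+t)-2\psi(y) \le t[\psi'(y+t) - \psi'(y-t)]$, with $\psi'$ the (bounded) a.e.\ derivative. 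Taking expectations, using $|\psi'|\le |\loss|_1$ (by \Cref{Bem:Consistency_Consistency_Lipschitz}) together with the symmetry of the conditional and a change of variables, the right-hand side collapses to $\mathcal{C}_n^*(t) \le |\loss|_1\cdot t\cdot \Pbed[|Y|\le t]{X=n}$. Plugging in the elementary Cauchy bound $\Pbed[|Y|\le t]{X=n}\le 2t/(\pi s_n)$ and then setting $t = 2^n$ gives $\riskshift(\fn) \le 2|\loss|_1/(\pi n) \to 0$.

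The main obstacle is precisely this quadratic-in-$t$ bound for $\mathcal{C}_n^*(t)$: the naïve Lipschitz estimate $\mathcal{C}_n^*(t) \le |\loss|_1|t|$ only yields $\riskshift(\fn)\le |\loss|_1$, which is bounded but does not vanish. Extracting the additional factor $\Pbed[|Y|\le t]{X=n}$ via the symmetrization and convexity argument is what captures the key phenomenon at play: for Lipschitz losses, moving the function value at $X=n$ changes the shifted inner risk only in proportion to the mass that the conditional distribution places near $0$, and this can be made arbitrarily small by choosing $s_n$ large. Everything else---membership of $\fn$ in $\Lapx$, uniqueness of $\fshiftbayes$, and vanishing Bayes risk---follows directly from the symmetric, absolutely continuous, and non-degenerate structure of the construction.
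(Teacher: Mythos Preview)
Your construction is correct and the key bound $\mathcal{C}_n^*(t)\le |\loss|_1\, t\, \Pbed[|Y|\le t]{X=n}$ does go through: writing $\Delta(y):=\psi(y-t)+\psi(y+t)-2\psi(y)$, the subgradient inequality gives $\Delta(y)\le t\big[\psi'(y+t)-\psi'(y-t)\big]$, and after the change of variables $z=y\pm t$ one obtains $\ew{\psi'(Y+t)-\psi'(Y-t)}=\int \psi'(z)\big[p_n(z-t)-p_n(z+t)\big]\diff z$; since $\psi'$ is odd and $p_n$ even, the integrand is even, and on $z>0$ one has $0\le\psi'(z)\le|\loss|_1$ and $p_n(z-t)\ge p_n(z+t)$, so the whole expression is bounded by $2|\loss|_1\,\Pbed[|Y|<t]{X=n}$. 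The remaining steps (uniqueness of $\fshiftbayes\equiv0$ via strict convexity of $t\mapsto\mathcal{C}_n^*(t)$, and the Cauchy tail bound) are routine. One small slip: the \emph{non-shifted} inner risk $\int\psi(y-t)\diff\Pbed[y]{X=n}$ is actually infinite for Cauchy conditionals; what you are (correctly) minimizing is the \emph{shifted} inner risk $\mathcal{C}_n^*$, so phrase it accordingly.

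Your route is genuinely different from the paper's and in several respects cleaner. The paper takes $\X=(0,1)$ with $\Px=\unif$ and a conditional distribution that is a mixture of $\unif[-1,1]$ and two Dirac masses at $\pm a_x$, where $a_x$ is chosen via a careful analysis of the subdifferential of $\psi$ (exploiting that the one-sided derivative approaches its supremum). Establishing $\fshiftbayes\equiv0$ there requires an auxiliary lemma (\Cref{Lem:Consistency_Consistency_AuxRiskFinite}) to rule out $\riskshift(f)=-\infty$, and the risk-consistency step involves bounding two separate integrals. Your discrete-$\X$ Cauchy construction bypasses all of this: $\mathcal{C}_n^*(t)\ge 0$ is immediate from the symmetrized form, so no auxiliary lemma is needed, and the single inequality above replaces the subdifferential analysis. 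The price you pay is that $\X=\N$ is not an open subset of $\R^d$, so your example does not feed directly into the paper's subsequent \Cref{Cor:Consistency_Consistency_ShiftGegenbeispielSobolevRiskLp} and \Cref{Cor:Consistency_SVMs_ShiftGegenbeispielGauss}, which rely on modifying the functions $\fn$ inside Sobolev spaces or Gaussian RKHSs over an interval; the paper's choice $\X=(0,1)$ is tailored to those extensions.
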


Note that in the situation of \Cref{Prop:Consistency_Consistency_ShiftGegenbeispielRiskLp}, risk consistency does also not imply \Lp-consistency for any $p>1$ since \Lp-consistency for $p>1$ would imply \La-consistency.

We now take a special look at the \textit{$\tau$-pinball loss} (or just \textit{pinball loss})
\begin{align}\label{eq:Consistency_Consistency_Pinball}
	\losspin\colon&\ \YR\to [0,\infty)\,, \notag\\
	&\ (y,t) \mapsto \begin{cases}
		(1-\tau)\cdot(t-y) & \wenn y<t\,,\\
		\tau\cdot(y-t) & \wenn y\ge t\,,
	\end{cases}
\end{align}
$\tau\in(0,1)$, which is convex and distance-based with growth type 1, but not symmetric for $\tau\ne 0.5$. As mentioned in the introduction, the pinball loss can be used for quantile regression, \ie for estimating the conditional quantiles
\begin{align*}
	F_{\tau,\P}^*\colon&\ \X\to 2^\R\,,\notag\\ 
	&\ x\mapsto \{t^*\,\left|\, \P((-\infty,t^*]|x)\ge \tau\text{ and }\P([t^*,\infty)|x)\ge 1-\tau \right.\}\,,
\end{align*}
see also \citet{koenker1978,koenker2001,takeuchi2006,steinwart2011}.

If one assumes these conditional quantiles $F_{\tau,\P}^*(x)$ to \Px-\fs be singletons, it is possible to denote them by the \Px-\fs unique quantile function $\ftaubayes\colon \X\to \R$ defined by $\{\ftaubayes(x)\}= F_{\tau,\P}^*(x)$ for all $x\in\X$. Recall that
this \ftaubayes is the up to \Px-zero sets only measurable function satisfying 
\begin{align}\label{eq:Consistency_Consistency_PinballBayesRisk}
	\risk[\losspin,\P](\ftaubayes)=\riskbayes[\losspin,\P]\,
\end{align}
if $\riskbayes[\losspin,\P]$ is finite, and similarly, that \ftaubayes satisfies
\begin{align}\label{eq:Consistency_Consistency_ShiftPinballBayesRisk}
	\risk[\losspinshift,\P](\ftaubayes)=\riskbayes[\losspinshift,\P]
\end{align}
and is the up to \Px-zero sets only measurable function doing so if $\riskbayes[\losspinshift,\P]$ is finite.
This ties our assumption of the conditional quantiles \Px-\fs being singletons to \Cref{Bem:Consistency_Consistency_BayesUnique} about the required \Px-\fs uniqueness of the Bayes function and yields $\ftaubayeswithshiftloss\equiv\ftaubayes$ \Px-\fs

As non-symmetric loss functions are not covered by \Cref{Prop:Consistency_Consistency_ShiftGegenbeispielRiskLp} and as the pinball loss is the probably most popular among these, we specifically investigate this loss function's behavior and obtain the following analogous result to \Cref{Prop:Consistency_Consistency_ShiftGegenbeispielRiskLp}:

\begin{myprop}\label{Prop:Consistency_Consistency_ShiftGegenbeispielPinRiskLp}
	Let $\Y=\R$. Let $\tau\in(0,1)$ and let \losspinshift be the shifted version of the $\tau$-pinball loss.\footnote{It can easily be seen that this shifted pinball loss function is, for $\tau\in(0,1)$, \begin{align*}
			\losspinshift\colon&\ \YR \to \R\\
			&\ (y,t) \mapsto \losspin(y,t) - \losspin(y,0) =  \begin{cases}
				(1-\tau)\cdot t &\wenn y<\min\{0,t\}\,,\\
				(1-\tau)\cdot t - y &\wenn 0\le y<t\,,\\
				y - \tau\cdot t &\wenn t\le y<0\,,\\
				- \tau\cdot t &\wenn y\ge \max\{0,t\}\,.
			\end{cases}
	\end{align*}} Then, even if \ftaubayes is \Px-\fs unique with $\ftaubayes\in\Lapx$, a sequence $(\fn)_{n\in\N}\subseteq\Lapx$ of functions satisfying 
	\begin{align*}
		\limn \risk[\losspinshift,\P](\fn) = \risk[\losspinshift,\P]^*
	\end{align*}
	does in general \textbf{not} imply 
	\begin{align*}
		\limn \normLapx{\fn - \ftaubayes} = 0
	\end{align*}
	without any additional assumptions besides \Cref{Ann:Consistency_Pre_AllgAnn} being imposed.
\end{myprop}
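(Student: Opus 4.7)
The plan is to construct an explicit counterexample on a countably infinite $\X$. A singleton $\X$ is ruled out because the shifted inner risk
\[
\innerrisk[\losspinshift,\Pbed{x}](t)=\int_{0}^{t}(F(s)-\tau)\,ds
\]
is convex and coercive in $t$ (the integrand tends to $1-\tau>0$ as $s\to\infty$ and to $-\tau<0$ as $s\to-\infty$) with strict minimum at the $\Px$-\fs\ unique $\tau$-quantile $t^{*}$, so any risk-consistent sequence of real numbers is automatically bounded and has $t^{*}$ as its only limit point, forcing $\La$-convergence. The trick to avoid this is to spread the pathology across countably many $x\in\X$, each equipped with a conditional distribution whose CDF is only barely above $\tau$ over a long interval to the right of its quantile; placing a large perturbation $L_{n}$ at a single location of small $\Px$-mass then yields a non-vanishing $\La(\Px)$-distance while keeping the excess risk small.

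Concretely, I take $\X:=\N$ with the discrete metric (which is complete and separable) and set $\Px(\{k\}):=2^{-k}$. For each $k\in\N$, let $\Pbed{k}$ be the Borel probability on $\R$ whose CDF $F_{k}$ rises linearly from $0$ at $s=-1$ to $\tau$ at $s=0$, continues linearly to $\tau+\varepsilon_{k}$ at $s=L_{k}$, and equals $1$ for $s\ge L_{k}$, with $L_{k}:=2^{k}$ and $\varepsilon_{k}:=1/k$. These data paste together into a Borel probability $\P$ on $\XY$ satisfying \Cref{Ann:Consistency_Pre_AllgAnn}. Since each $F_{k}$ is strictly increasing on $[-1,L_{k}]$ and crosses $\tau$ precisely at $0$, the $\tau$-quantile of $\Pbed{k}$ equals the singleton $\{0\}$; thus $\ftaubayes\equiv 0$ is $\Px$-\fs\ unique, lies in $\Lapx$, and achieves $\risk[\losspinshift,\P]^{*}=0$.

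Now define $\fn\colon\N\to\R$ by $\fn(n):=L_{n}$ and $\fn(k):=0$ for $k\ne n$. Clearly $\fn\in\Lapx$ and
\[
\normLapx{\fn-\ftaubayes}=2^{-n}L_{n}=1\qquad\text{for every } n\in\N,
\]
so $\La$-convergence to $\ftaubayes$ fails. The piecewise formula for $\losspinshift$ from the footnote yields, by a short direct calculation (equivalently, Stieltjes integration by parts), $\innerrisk[\losspinshift,\Pbed{k}](t)=\int_{0}^{t}(F_{k}(s)-\tau)\,ds$ for every $t\in\R$. The inner excess risk thus vanishes for $k\ne n$, while at $k=n$ it equals $\int_{0}^{L_{n}}\varepsilon_{n}s/L_{n}\,ds=\varepsilon_{n}L_{n}/2$, giving
\[
\risk[\losspinshift,\P](\fn)-\risk[\losspinshift,\P]^{*}=2^{-n}\cdot\tfrac{\varepsilon_{n}L_{n}}{2}=\tfrac{1}{2n}\longrightarrow 0.
\]

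The main obstacle is calibrating the free sequences so that, simultaneously, $\Px(\{k\})L_{k}$ stays bounded below (non-vanishing $\La$-distance), $\Px(\{k\})\varepsilon_{k}L_{k}\to 0$ (risk consistency), and $\varepsilon_{k}>0$ (uniqueness of the $\tau$-quantile); the choice $L_{k}=2^{k}$, $\varepsilon_{k}=1/k$ against the geometric $\Px$ achieves this. Deriving the identity for $\innerrisk[\losspinshift,\Pbed{k}]$ and verifying that the constructed $\P$ meets \Cref{Ann:Consistency_Pre_AllgAnn} together with the regularity requirements on $\ftaubayes$ is then elementary.
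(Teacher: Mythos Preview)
Your construction is correct and takes a genuinely different route from the paper's. The paper works on $\X=(0,1)$ with $\Px$ uniform and conditional laws $\Pbed{x}=x\,(\tau\,\unif[-1,0]+(1-\tau)\,\unif[0,1])+(1-x)\,(\tau\,\dirac[-1/x]+(1-\tau)\,\dirac[1/x])$, then sets $f_n=n$ on $(0,1/n)$ and $0$ elsewhere; the Dirac pair at $\pm 1/x$ makes the inner excess risk essentially flat as $x\to 0$, so the excess risk is of order $1/n$ while $\normLapx{f_n}=1$. Your discrete setup isolates the same mechanism more transparently: by prescribing the slope $\varepsilon_k/L_k$ of $F_k-\tau$ directly, the identity $\innerrisk[\losspinshift,\Pbed{k}](t)=\int_0^t(F_k(s)-\tau)\,ds$ turns the balancing of $\Px$-mass, perturbation size, and excess risk into a one-line computation. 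What the paper's construction buys in return is that $\X$ is an interval, so the counterexample feeds directly into the subsequent Sobolev and Gaussian-RKHS corollaries; a discrete $\X$ would not carry those function spaces.

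One small technical fix: your $F_k$ is a valid CDF only if $\tau+\varepsilon_k\le 1$, which fails for the finitely many $k<1/(1-\tau)$ when $\varepsilon_k=1/k$. Replacing $\varepsilon_k$ by $(1-\tau)/k$ (or by $\min\{1/k,(1-\tau)/2\}$) leaves every computation intact.
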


As the preceding results allow for arbitrary sequences of functions in \Lapx, we might still hope to deduce \La-consistency following from \lossshift-risk consistency by restricting ourselves to smaller function spaces with more structure like Sobolev spaces. However, the subsequent corollary shows that \Cref{Prop:Consistency_Consistency_ShiftGegenbeispielRiskLp} and \Cref{Prop:Consistency_Consistency_ShiftGegenbeispielPinRiskLp} can even be strengthened to sequences of functions from Sobolev spaces. Here, we assume that $\X\subseteq\R^d$ open for some $d\in\N$, and we denote by \sobolevX the Sobolev space consisting of all functions from $\L{q}(\X)$ whose weak derivatives up to order $m$ are also in $\L{q}(\X)$, \vgl \citet[Definition~3.2]{adams2003}. Here, as usual, $\L{q}(\X)$ denotes the $\L{q}$-space with respect to the Lebesgue measure on \X.

\begin{mykor}\label{Cor:Consistency_Consistency_ShiftGegenbeispielSobolevRiskLp}
	Let $d\in\N$, $\X\subseteq\R^d$ open, and $\Y=\R$. Let $\loss\colon\YR\to[0,\infty)$ be a convex, distance-based and symmetric loss function of growth type 1, or the $\tau$-pinball loss for some $\tau\in(0,1)$. Let \lossshift be its shifted version. Let $m\in\N$ and $1\le q\le\infty$. Then, even if \fshiftbayes is \Px-\fs unique with $\fshiftbayes\in \Lapx$, a sequence $(f_n)_{n\in\N}\subseteq\sobolevX\cap\Lapx$ of functions satisfying 
	\begin{align*}
		\limn \riskshift(f_n) = \riskshiftbayes
	\end{align*}
	does in general \textbf{not} imply 
	\begin{align*}
		\limn \normLapx{f_n - \fshiftbayes} = 0
	\end{align*}
	without any additional assumptions besides \Cref{Ann:Consistency_Pre_AllgAnn} being imposed.
\end{mykor}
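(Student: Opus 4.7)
The idea is to take the counterexample sequences constructed in \Cref{Prop:Consistency_Consistency_ShiftGegenbeispielRiskLp} and \Cref{Prop:Consistency_Consistency_ShiftGegenbeispielPinRiskLp} and to smooth them by an \Lapx-approximation with compactly supported $C^\infty$-functions, which then automatically lie in every Sobolev space \sobolevX.

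First, I would collect two ingredients. By \Cref{Bem:Consistency_Consistency_Lipschitz}, a convex, distance-based \loss of upper growth type 1 is Lipschitz continuous with some constant $|\loss|_1$; the same holds for the $\tau$-pinball loss with constant $\max\{\tau,1-\tau\}$. Since the shift $\loss(x,y,0)$ does not depend on $t$, this Lipschitz property passes verbatim to \lossshift, and one gets, as in \eqref{eq:Consistency_Consistency_ShiftRiskFinite}, for all $g,h\in\Lapx$ the bound
\begin{equation*}
  |\riskshift(g)-\riskshift(h)| \le |\loss|_1\cdot\normLapx{g-h}\,.
\end{equation*}
Furthermore, since $\X\subseteq\R^d$ is open, it is a locally compact Polish space, so \Px is a (tight and therefore) regular Borel probability measure on it, and $C_c^\infty(\X)$ is dense in \Lapx by Lusin's theorem combined with mollification.

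Second, I would fix the probability measure \P and the counterexample sequence $(\fn)_{n\in\N}\subseteq\Lapx$ provided by the relevant proposition. After passing to a subsequence there exists $\delta>0$ with $\normLapx{\fn-\fshiftbayes}\ge 2\delta$ for all $n$, while $\riskshift(\fn)\to\riskshiftbayes$. Using the density statement above, for every $n\in\N$ I would choose $\ftilde[n]\in C_c^\infty(\X)$ with $\normLapx{\ftilde[n]-\fn}\le 1/n$. Every $C_c^\infty$-function is bounded and has bounded, compactly supported weak derivatives of every order, so $\ftilde[n]\in\sobolevX\cap\Lapx$ for every $m\in\N$ and every $1\le q\le\infty$, and the sequence $(\ftilde[n])_{n\in\N}$ is admissible in the statement of the corollary.

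Third, I would combine the two estimates. The Lipschitz bound gives $|\riskshift(\ftilde[n])-\riskshift(\fn)|\le|\loss|_1/n\to 0$, hence $\riskshift(\ftilde[n])\to\riskshiftbayes$, while the reverse triangle inequality gives $\normLapx{\ftilde[n]-\fshiftbayes}\ge 2\delta-1/n\ge\delta$ for $n$ sufficiently large, so this sequence does not converge to \fshiftbayes in \Lapx. The only non-routine ingredient is the density of $C_c^\infty(\X)$ in \Lapx for an arbitrary Borel probability measure \Px on an open set $\X\subseteq\R^d$, which is standard; everything else amounts to transferring the non-convergence and the risk convergence through a small $\Lapx$-perturbation, which the Lipschitz property of \lossshift handles directly.
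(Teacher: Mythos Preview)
Your argument is correct, and it takes a different route from the paper's. The paper does not approximate abstractly; instead it replaces the step functions $g_n$ from the proofs of \Cref{Prop:Consistency_Consistency_ShiftGegenbeispielRiskLp,Prop:Consistency_Consistency_ShiftGegenbeispielPinRiskLp} by the explicit polynomial bumps $f_n(x)=n(1-nx)^m$ on $(0,1/n)$, which are $m$ times weakly differentiable and bounded, hence lie in $\sobolevX[m,\infty]\cap\Lapx$. Risk convergence is then obtained not via Lipschitz continuity but by a sandwich argument: since $\fshiftbayes\equiv 0\le f_n\le g_n$ pointwise and the inner risk $t\mapsto\innerriskshiftbed(t)$ is convex with minimizer $0$, one has $\riskshift(f_n)-\riskshiftbayes\le\riskshift(g_n)-\riskshiftbayes\to 0$. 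The \La-norm is computed directly as $\int_0^{1/n}n(1-nx)^m\diff x=1/(m+1)$.

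Your approach is more conceptual and in fact more portable: it only uses that the target function class contains a set dense in \Lapx, plus the Lipschitz bound $|\riskshift(g)-\riskshift(h)|\le|\loss|_1\,\normLapx{g-h}$. This is precisely the mechanism the paper itself employs later for the Gaussian RKHS in \Cref{Cor:Consistency_SVMs_ShiftGegenbeispielGauss}, where no explicit Sobolev-type modification is available and one instead approximates the $g_n$ in sup-norm by universality. So your proof unifies the two corollaries under one argument. The paper's explicit construction, on the other hand, avoids invoking Lusin and mollification and gives a concrete sequence whose \La-distance to \fshiftbayes is computed exactly rather than bounded below; it also yields functions in $\sobolevX[m,\infty]$, which is slightly sharper than the $C_c^\infty\subseteq\sobolevX$ inclusion you use. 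Both arguments are short; yours is the more transferable one.
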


The preceding results show that it is not possible to get rid of the moment condition from \Cref{Thm:Consistency_Consistency_RiskLp} (\vgl \Cref{Bem:Consistency_Consistency_RiskLpsAlternativbedingungen}) just by transferring it to shifted loss functions. It might, however, still be possible to circumvent this moment condition by instead imposing some different and less restrictive conditions. For the pinball loss from \eqref{eq:Consistency_Consistency_Pinball}, \ie for doing quantile regression, we are indeed able to derive such an alternative and in many cases less restrictive condition regarding \P. To be more specific, the conditional distribution \Pbed{X} is, in some sense, not allowed to be too heteroscedastic and it has to be continuous in the conditional quantiles $\ftaubayes(x)$, $x\in\X$: 

\begin{mythm}\label{Thm:Consistency_Consistency_ShiftPinRiskLa}
	Let $\tau\in(0,1)$ and \losspinshift be the shifted version of the $\tau$-pinball loss. Assume that \ftaubayes is \Px-\fs unique, $\ftaubayes\in \Lapx$, and \P additionally satisfies at least one of the following conditions:
	\begin{enumerate}[label=(\roman*)]
		\item $|\P|_1 < \infty$.
		\item There exist $c_1,c_2>0$ such that
		\begin{align}\label{eq:Thm_Consistency_Consistency_ShiftPinRiskLa_WMasseHerum}
			&\P\Big((\ftaubayes(X)-c_1,\ftaubayes(X))\,\big|\,X\Big) \ge c_2 \,\,\text{ and }\,\,\notag\\
			&\P\Big((\ftaubayes(X),\ftaubayes(X)+c_1)\,\big|\,X\Big) \ge c_2
		\end{align}
		\Px-\fs, as well as 
		\begin{align}\label{eq:Thm_Consistency_Consistency_ShiftPinRiskLa_WMasse0}
			\Pbed[\ftaubayes(X)]{X}=0
		\end{align}
		\Px-\fs
	\end{enumerate}
	Then, for every sequence $(\fn)_{n\in\N}\subseteq \Lapx$, we have
	\begin{align*}
		\limn \risk[\losspinshift,\P](\fn) = \risk[\losspinshift,\P]^*  \qquad \Rightarrow\qquad  \limn ||\fn - \ftaubayes||_{\Lapx} = 0\,.
	\end{align*}
\end{mythm}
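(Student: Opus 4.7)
The plan is to treat the two sufficient conditions separately. Under condition (i), $|\P|_1<\infty$ ensures that the unshifted pinball risk $\risk[\losspin,\P](f)$ is finite for every $f\in\Lapx$ (since $\losspin(y,t)\le\max(\tau,1-\tau)(|y|+|t|)$), and
\begin{equation*}
\risk[\losspinshift,\P](f) = \risk[\losspin,\P](f) - \int_{\XY}\losspin(y,0)\,\diff\P(x,y),
\end{equation*}
with the subtracted constant finite. Thus $\losspinshift$-risk consistency is equivalent to $\losspin$-risk consistency, and \Cref{Thm:Consistency_Consistency_RiskLp} applied to the (convex, lower growth type $1$) pinball loss delivers the claimed $\Lapx$-consistency.

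Under condition (ii), I would first derive a pointwise representation of the inner excess risk $E_x(t):=\innerrisk[\losspinshift,\Pbed{x}](t)-\innerrisk[\losspinshift,\Pbed{x}](\ftaubayes(x))$. Since $\losspinshift-\losspin$ depends only on $(x,y)$, the convex inner risk $g_x(t):=\innerrisk[\losspinshift,\Pbed{x}](t)$ has right derivative $\Pbed[{(-\infty,t]}]{x}-\tau$, which by \eqref{eq:Thm_Consistency_Consistency_ShiftPinRiskLa_WMasse0} vanishes at $t=\ftaubayes(x)$; integrating yields
\begin{equation*}
E_x(t) = \int_{\ftaubayes(x)}^{t} \bigl(\Pbed[{(-\infty,s]}]{x}-\tau\bigr)\,\diff s \ge 0
\end{equation*}
for $t\ge\ftaubayes(x)$, and analogously for $t<\ftaubayes(x)$. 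Combining \eqref{eq:Thm_Consistency_Consistency_ShiftPinRiskLa_WMasseHerum} with \eqref{eq:Thm_Consistency_Consistency_ShiftPinRiskLa_WMasse0} gives $\Pbed[{(-\infty,\ftaubayes(x)+c_1]}]{x}\ge\tau+c_2$ and its left-sided analogue, whence the large-deviation bound $E_x(t)\ge (c_2/2)\,|t-\ftaubayes(x)|$ whenever $|t-\ftaubayes(x)|\ge 2c_1$.

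Next, the $\Px$-\fs\ uniqueness of $\ftaubayes$ (equivalently: uniqueness of the minimizer of $g_x$ for $\Px$-a.e.\ $x$) together with convexity and nonnegativity of $E_x$ implies $h_x(\epsilon):=\inf_{|t-\ftaubayes(x)|\ge\epsilon}E_x(t)>0$ $\Px$-\fs\ for every $\epsilon>0$. A Markov-type splitting then gives, for all $\delta>0$,
\begin{equation*}
\Px\bigl(|\fn-\ftaubayes|\ge\epsilon\bigr) \le \Px\bigl(h_X(\epsilon)<\delta\bigr) + \delta^{-1}\bigl(\risk[\losspinshift,\P](\fn)-\risk[\losspinshift,\P]^*\bigr),
\end{equation*}
and letting $n\to\infty$ and then $\delta\to 0$ (continuity of measure) yields $\fn\to\ftaubayes$ in $\Px$-probability. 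Finally, for any $\epsilon\in(0,2c_1)$, splitting $\|\fn-\ftaubayes\|_{\Lapx}$ over the three sets $\{|\cdot|\le\epsilon\}$, $\{\epsilon<|\cdot|\le 2c_1\}$, $\{|\cdot|>2c_1\}$ --- with the large-deviation bound applied to the last piece --- produces
\begin{equation*}
\|\fn-\ftaubayes\|_{\Lapx} \le \epsilon + 2c_1\,\Px\bigl(|\fn-\ftaubayes|>\epsilon\bigr) + \tfrac{2}{c_2}\bigl(\risk[\losspinshift,\P](\fn)-\risk[\losspinshift,\P]^*\bigr),
\end{equation*}
so sending $n\to\infty$ and then $\epsilon\to 0$ completes the proof.

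The main obstacle I anticipate is the absence of any useful quantitative lower bound on $E_x(t)$ in the small-deviation regime $|t-\ftaubayes(x)|<c_1$: convexity of $E_x$ near its minimum actually \emph{prevents} a bound of the form $E_x(t)\ge C|t-\ftaubayes(x)|$ there, and the mass condition \eqref{eq:Thm_Consistency_Consistency_ShiftPinRiskLa_WMasseHerum} only sees the scale $c_1$. The way out is a scale separation: control large deviations quantitatively via the mass condition, and control small deviations only qualitatively, by extracting convergence in probability from the mere uniqueness of the minimizer of $g_x$.
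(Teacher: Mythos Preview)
Your proposal is correct and follows the paper's strategy almost exactly: reduce condition (i) to \Cref{Thm:Consistency_Consistency_RiskLp} via the constant shift, and under condition (ii) combine the same inner excess-risk lower bound (the paper writes it as $\int_{[\ftaubayes(x),f_n(x))}(f_n(x)-y)\diff\Pbed[y]{x}$ rather than integrating the CDF, but the resulting bound $\frac{c_2}{2}|f_n(x)-\ftaubayes(x)|$ on $\{|f_n-\ftaubayes|>2c_1\}$ is identical) with convergence in $\Px$-probability and the same three-way split of $\|\fn-\ftaubayes\|_{\Lapx}$. The only noteworthy difference is that the paper obtains $f_n\to\ftaubayes$ in $\Px$-probability by invoking \citet[Corollary~31]{christmann2009}, whereas your Markov-type argument from the $\Px$-\fs\ strict positivity of $h_X(\epsilon)$ is a self-contained alternative.
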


Even though it was not possible to get rid of the moment condition (i) without imposing the new condition (ii), this still substantially expands the theorem's applicability since there are many cases in which (ii) (whose first part is visualized in \Cref{Abb:Consistency_Consistency_VisualizationC1C2}) is satisfied even though (i) is not:

\begin{figure}[t]
	\begin{center}
		\vspace*{-0.5cm}\includegraphics[width=0.9\textwidth]{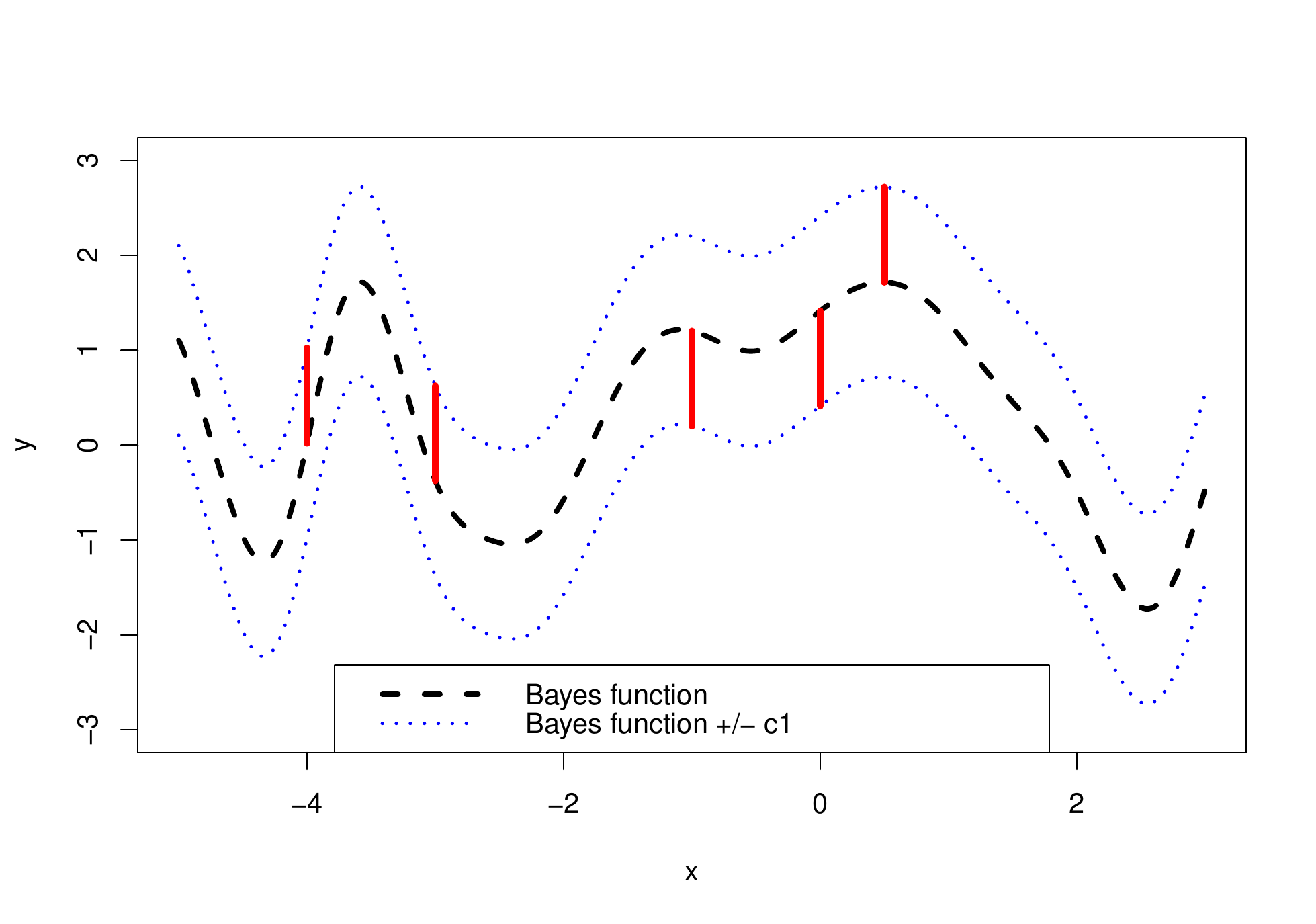}
		\vspace*{-0.5cm}\caption{Visualization of \eqref{eq:Thm_Consistency_Consistency_ShiftPinRiskLa_WMasseHerum}. Each vertical slice between $\ftaubayes-c_1$ and \ftaubayes as well as between \ftaubayes and $\ftaubayes+c_1$ needs to have a conditional probability (given $x$) of at least $c_2$. The solid vertical lines depict some examples of such slices whose conditional probability needs to be at least $c_2$.}
		\label{Abb:Consistency_Consistency_VisualizationC1C2}
	\end{center}
\end{figure}

\begin{mybsp}\label{Bsp:Consistency_Consistency_HomoscedasticRegression}
	Assume that $\tau\in(0,1)$ and that we have an underlying homoscedastic regression model like
	\begin{align*}
		Y = f(X) + \eps\,,
	\end{align*} 
	where $f\colon\X\to\Y$ is an arbitrary measurable function and \eps is a continuous random variable whose distribution does not depend on the value of $X$. Whenever \eps has a unique $\tau$-quantile $q_\tau\in\R$, (ii) from \Cref{Thm:Consistency_Consistency_ShiftPinRiskLa} holds true with $\ftaubayes=f+q_\tau$. For example, \eps can follow a Cauchy distribution with location and scale parameters which are fixed independently of the value of $X$. In this case, the moment condition (i) does not hold true, but \Cref{Thm:Consistency_Consistency_ShiftPinRiskLa} does still yield \La-consistency following from risk consistency.
\end{mybsp}

\begin{mybsp}
	The independence of \eps from $X$ in \Cref{Bsp:Consistency_Consistency_HomoscedasticRegression} is not even strictly necessary. Assume the more general heteroscedastic model
	\begin{align*}
		Y = f(X) + \eps_X\,,
	\end{align*}
	where the distribution of $\eps_X$ is now allowed to depend on the value $x$ of $X$. If, for example, there exist $C>0$ and $c_1>0$ such that $\eps_x$ has a unique $\tau$-quantile $q_{x,\tau}\in\R$ and Lebesgue density greater than $C$ on $(q_{x,\tau}-c_1,q_{x,\tau}+c_1)$ for \Px-almost all $x\in\X$, condition (ii) from \Cref{Thm:Consistency_Consistency_ShiftPinRiskLa} is still satisfied. 
	
	For example, this situation is on hand if $\X=\R^d$ for some $d\in\N$, $\Y=\R$, and $\eps_x$ follows a Cauchy distribution with location parameter $\cos(||x||_2)$ and scale parameter $2+\sin(||x||_2)$ for all $x\in\X$. More generally, the same also holds true for different choices of location and scale parameters, as long as they are bounded from above and from below (in the case of the scale parameter we mean bounded away from zero by bounded from below).
\end{mybsp}

We saw that \La-consistency can not be obtained from risk consistency without imposing some different, albeit in some sense weaker, condition regarding \P in exchange for omitting the moment condition. It is, however, indeed possible to just omit the moment condition in the reverse statement (\Cref{Thm:Consistency_Consistency_LpRisk}) when transferring this to shifted loss functions in the case of having a convex loss function of upper growth type 1, which again hints at this direction being the easier one as it was mentioned in the introduction.

\begin{mythm}\label{Thm:Consistency_Consistency_Shift_L1Risk}
	Let $\loss\colon\Y\to\R$ be a convex, distance-based loss function of upper growth type 1, and let \lossshift be its shifted version. Then, for every sequence $(\fn)_{n\in\N}\subseteq\Lapx$ and every function $\fstar\in\Lapx$, we have
	\begin{align*}
		\limn \normLapx{\fn-\fstar} = 0 \qquad \Rightarrow \qquad \limn \riskshift(\fn) = \riskshift(\fstar)\,.
	\end{align*}
\end{mythm}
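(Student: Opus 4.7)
The plan is to combine the Lipschitz continuity of $\loss$, which is guaranteed by \Cref{Bem:Consistency_Consistency_Lipschitz} because $\loss$ is convex, distance-based, and of upper growth type $1$, with the elementary but crucial observation that in the pointwise difference $\lossshift(x,y,\fn(x))-\lossshift(x,y,\fstar(x))$ the potentially non-integrable term $\loss(x,y,0)$ cancels exactly. This cancellation is the entire reason the statement does not require any moment condition on $\P$, in stark contrast to \Cref{Thm:Consistency_Consistency_LpRisk}.

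First I would verify that both $\riskshift(\fn)$ and $\riskshift(\fstar)$ are well-defined real numbers: since $\fn,\fstar\in\Lapx$, the bound \eqref{eq:Consistency_Consistency_ShiftRiskFinite} yields $|\riskshift(\fn)|\le|\loss|_1\cdot\normLapx{\fn}<\infty$ and analogously for $\fstar$. In particular the random variable $\lossshift(X,Y,\fn(X))-\lossshift(X,Y,\fstar(X))$ is integrable, so the linearity of the expectation gives
\begin{align*}
    |\riskshift(\fn)-\riskshift(\fstar)|
    &\le \ew{\left|\lossshift(X,Y,\fn(X))-\lossshift(X,Y,\fstar(X))\right|}\\
    &= \ew{\left|\loss(X,Y,\fn(X))-\loss(X,Y,\fstar(X))\right|}\\
    &\le |\loss|_1\cdot\ew{|\fn(X)-\fstar(X)|}
    = |\loss|_1\cdot\normLapx{\fn-\fstar}\,,
\end{align*}
where the equality in the second line is the cancellation of the two $\loss(X,Y,0)$ terms, and the subsequent inequality is the Lipschitz property from \Cref{Bem:Consistency_Consistency_Lipschitz}. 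Letting $n\to\infty$ and invoking the hypothesis $\normLapx{\fn-\fstar}\to 0$ then immediately yields $\riskshift(\fn)\to\riskshift(\fstar)$, which is the desired conclusion.

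There is no genuine obstacle: the whole proof is a one-line Lipschitz estimate once the cancellation has been spotted. What is worth emphasizing, however, is that precisely this cancellation is what makes the theorem applicable to heavy-tailed $\P$ such as Cauchy-type distributions, and it is exactly the mechanism that is \emph{not} available in the reverse direction, thereby explaining the marked asymmetry of difficulty between this theorem and the negative results \Cref{Prop:Consistency_Consistency_ShiftGegenbeispielRiskLp} and \Cref{Prop:Consistency_Consistency_ShiftGegenbeispielPinRiskLp}.
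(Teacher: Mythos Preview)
Your proof is correct and essentially identical to the paper's own argument: the paper also invokes \eqref{eq:Consistency_Consistency_ShiftRiskFinite} for finiteness, cancels the $\loss(y,0)$ terms, and applies the Lipschitz bound from \Cref{Bem:Consistency_Consistency_Lipschitz} to obtain $|\riskshift(\fn)-\riskshift(\fstar)|\le|\loss|_1\cdot\normLapx{\fn-\fstar}\to 0$. Your additional remarks on why this mechanism fails in the reverse direction are accurate and go slightly beyond what the paper states in the proof itself.
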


\section{Consistency of Regularized Kernel Methods}\label{Sec:Consistency_SVMs}

After having derived general results regarding the connection between \Lp- and risk consistency in \Cref{Sec:Consistency_Consistency}, we would like to apply these results to special predictors now. More specifically, we investigate kernel-based regularized risk minimizers, which we also call \textit{support vector machines (SVMs)}. We are thus using the term SVM in a broad sense, allowing not only for the hinge loss (as the expression SVM is used in some works) but rather for arbitrary loss functions including the distance-based losses used in \Cref{Sec:Consistency_Consistency}.

We first give a formal definition and some further mathematical prerequisites regarding SVMs as well as a short recap of some of their known properties in \Cref{SubSec:Consistency_SVMs_Pre}. In \Cref{SubSec:Consistency_SVMs_Reg}, we then first use our results from \Cref{SubSec:Consistency_Consistency_Reg} to derive a result on their \Lp-consistency, where no general result existed so far, and then derive a new result on their risk consistency, which in some part slightly weakens the conditions from existing results on risk consistency. Finally, we examine SVMs based on shifted loss functions in \Cref{SubSec:Consistency_SVMs_Shift}.

\subsection{Prerequisites regarding regularized kernel methods}\label{SubSec:Consistency_SVMs_Pre}

As the true distribution \P is usually unknown in practice, one has to make do with the information available about \P, \ie the data set $\Dn:=((x_1,y_1),\dots,(x_n,y_n))\in(\XY)^n$ mentioned in the introduction and consisting of \iid\@ observations sampled from \P, instead of minimizing \risk directly. This is approached by using the empirical distribution 
\begin{align*}
	\DVertn := \frac{1}{n}  \sum_{i=1}^{n} \delta_{(x_i,y_i)}\,,
\end{align*}
corresponding to \Dn, with $\delta_{(x_i,y_i)}$ denoting the Dirac measure in $(x_i,y_i)$, and defining the \textit{empirical risk} \riskempn analogously to \risk, which results in
\begin{align*}
	\riskempn(f) := \ew[\DVertn]{\loss(X,Y,f(X))} = \frac{1}{n}  \sum_{i=1}^{n} \loss(x_i,y_i,f(x_i))\,.
\end{align*}

Because just minimizing \riskempn constitutes an ill-posed problem and usually results in some extent of overfitting, a regularization term has to be added. This leads to the definition of SVMs as minimizers of the regularized risk. More specifically, the \textit{empirical SVM} is defined as
\begin{align}\label{eq:Consistency_SVMs_empSVM}
	\fempn := \arg\inf_{f\in\H} \riskempn(f) + \lb ||f||_\H^2\,,
\end{align}
and the \textit{theoretical SVM} analogously as 
\begin{align}\label{eq:Consistency_SVMs_theoSVM}
	\ftheo := \arg\inf_{f\in\H} \risk(f) + \lb ||f||_\H^2\,.
\end{align}
In both definitions, $\lb>0$ is a {regularization parameter} which controls the amount of regularization and \H is the \textit{reproducing kernel Hilbert space} (RKHS) of a measurable \textit{kernel} on \X, \ie a symmetric and positive definite function $\k\colon \XX\to\R$, \vgl \citet{aronszajn1950,berlinet2004,saitoh2016} among others for a thorough introduction to this topic. We are often be interested in {bounded kernels} for which we define $||\k||_\infty := \sup_{x\in\X} \sqrt{k(x,x)}$. Additionally, we define the \textit{canonical feature map} $\Phi\colon\X\to\H$ by $\Phi(x):=\k(\cdot,x)$.

SVMs have been widely investigated and have been shown to possess many desirable properties including existence, uniqueness, risk consistency, statistical robustness, and the existence of representation theorems under rather mild assumptions. See for example \citet{vapnik1995,vapnik1998,schoelkopf2002,cucker2007,steinwart2008,vanmessem2020} for a detailed introduction. More recent results regarding statistical robustness and stability in general have for example been derived by
\citet{hable2011,sheng2020,eckstein2022arxiv,koehler2022}. Results on learning rates \citep[which have to make more restrictive assumptions regarding \P because of the no-free-lunch-theorem, \vgl][]{devroye1982} can for example be found in \citet{caponnetto2007,steinwart2009,eberts2013,hang2017,fischer2020}.

\subsection{Consistency of regularized kernel methods based on regular loss functions}\label{SubSec:Consistency_SVMs_Reg}

Whereas SVMs based on distance-based losses are known to be risk consistent under mild assumptions \citep[\vgl][Theorem~12]{christmann2007}, there are no general results on their \Lp-consistency so far, but instead only corollaries for special loss functions based on the results mentioned at the beginning of \Cref{SubSec:Consistency_Consistency_Reg}.

Since the conditions required by \citet[Theorem~12]{christmann2007} also imply the validity of \Cref{Thm:Consistency_Consistency_RiskLp}, \Lp-consistency of such SVMs would now directly follow under these conditions. However, by some more thorough investigations, we are even able to slightly relax the conditions on the sequence $(\lbn)_{n\in\N}$ of regularization parameters, namely only requiring it to satisfy $\lbn^{p^*}n\to \infty$ (as $n\to\infty$) for $p^*=\max\{p+1,p(p+1)/2\}$ instead of for $p^*=\max\{2p,p^2\}$, which is required by \citet[Theorem 12]{christmann2007}.

\begin{mythm}\label{Thm:Consistency_SVMs_LpCons}
	Let $\loss\colon\YR\to[0,\infty)$ be a convex, distance-based loss function of growth type $p\in[1,\infty)$. Let $\H\subseteq\Lppx$ dense and separable be the RKHS of a bounded and measurable kernel \k. Assume that \fbayes is \Px-\fs unique and $|\P|_p<\infty$. Define $p^*:=\max\{p+1,p(p+1)/2\}$. If the sequence $(\lb_n)_{n\in\N}$ satisfies $\lb_n>0$ for all $n\in\N$ as well as $\lb_n\to 0$ and $\lb_n^{p^*}n\to\infty$ for $n\to\infty$, then
	\begin{align*}
		\limn ||\fempnn-\fbayes||_{\Lppx} = 0 \qquad \text{in probability $\P^\infty$.}
	\end{align*}
\end{mythm}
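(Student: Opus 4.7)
The plan is to deduce $L_p$-consistency from risk consistency by invoking \Cref{Thm:Consistency_Consistency_RiskLp} on the random sequence $(\fempnn)_{n\in\N}\subseteq\H\subseteq\Lppx$. To verify its hypotheses: growth type $p$ is in particular lower growth type $p$; the Bayes function \fbayes is \Px-\fs unique by assumption; and since \loss is of growth type $p$, the moment condition $|\P|_p<\infty$ yields both $\fbayes\in\Lppx$ and $\riskbayes<\infty$ by \Cref{Bem:Consistency_Consistency_RiskLpsAlternativbedingungen}. Hence it suffices to show $\risk(\fempnn)\to\riskbayes$ in probability.

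To this end I use the standard decomposition
\begin{align*}
	\risk(\fempnn)-\riskbayes
	&= \underbrace{\bigl(\risk(\fempnn)-\risk(\ftheon)\bigr)}_{=:A_n} + \underbrace{\bigl(\risk(\ftheon)-\riskbayes\bigr)}_{=:B_n}
\end{align*}
into a stochastic estimation error $A_n$ and a deterministic approximation error $B_n$. For $B_n$, given $\eps>0$, density of \H in \Lppx produces some $g_\eps\in\H$ with $\normLppx{g_\eps-\fbayes}<\eps$; \Cref{Thm:Consistency_Consistency_LpRisk} bounds $\risk(g_\eps)-\riskbayes$ in terms of $\eps$, and using $g_\eps$ as a competitor in the definition~\eqref{eq:Consistency_SVMs_theoSVM} of \ftheon yields $B_n\le \risk(g_\eps)-\riskbayes+\lbn\normH{g_\eps}^2$, which vanishes as $n\to\infty$ and then $\eps\downarrow 0$ because $\lbn\to 0$.

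For the estimation error $A_n$, I first bound $\normH{\fempnn}$ and $\normH{\ftheon}$ by plugging $0$ into~\eqref{eq:Consistency_SVMs_empSVM} and~\eqref{eq:Consistency_SVMs_theoSVM}, obtaining $\normH{\fempnn}\le(\riskempn(0)/\lbn)^{1/2}$ and $\normH{\ftheon}\le(\risk(0)/\lbn)^{1/2}$. Combined with the reproducing estimate $\normSup{f}\le\normSup{\k}\normH{f}$ and the fact that $|\P|_p<\infty$ makes $\riskempn(0)$ concentrate around $\risk(0)<\infty$, both functions lie, with high probability, in a sup-norm ball of radius proportional to $\lbn^{-1/2}$. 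On that ball, convexity and upper growth type $p$ of the distance-based loss yield a local Lipschitz estimate of the form $|\loss(y,t)-\loss(y,t')|\le c\bigl(|y|^{p-1}+\lbn^{-(p-1)/2}+1\bigr)|t-t'|$. An oracle-style comparison of \fempnn with \ftheon exploiting strong convexity of the regularizer, paired with a concentration inequality for the empirical process $f\mapsto\riskempn(f)-\risk(f)$ over the relevant Hilbert ball, then produces a quantitative bound on $A_n$.

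The central obstacle is extracting the improved exponent $p^*=\max\{p+1,p(p+1)/2\}$ in place of the $\max\{2p,p^2\}$ obtained in \citet[Theorem~12]{christmann2007}. A blunt bound on the range of \loss on the sup-norm ball above produces a worst-case fluctuation of order $\lbn^{-p/2}$, which is the source of the larger exponent. To sharpen this I plan to truncate along $\{|Y|\le\lbn^{-1/2}\}$ and its complement: on the tail event the contribution is controlled directly by the moment bound $|\P|_p<\infty$, while on the truncated event a Bernstein-type concentration using a variance estimate (itself bounded via the local Lipschitz property together with the $p$-th moment of $Y$) replaces the crude Hoeffding bound. Tracking the resulting powers of $\lbn$ gives exactly the condition $\lbn^{p^*}n\to\infty$, so that $A_n\to 0$ in probability and the proof is complete.
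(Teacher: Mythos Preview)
Your overall architecture---reduce to risk consistency and then invoke \Cref{Thm:Consistency_Consistency_RiskLp}---is essentially the paper's, and your treatment of the approximation error $B_n$ matches the paper's argument for $\normLppx{\ftheon-\fbayes}$ (the paper phrases it via \citet[Theorem~5.31 and Lemma~5.15]{steinwart2008}, but it is the same density-plus-$\lbn\to0$ reasoning). One minor difference: the paper decomposes directly in the $L_p$-norm as $\normLppx{\fempnn-\fbayes}\le\normSup{\k}\,\normH{\fempnn-\ftheon}+\normLppx{\ftheon-\fbayes}$, so that \Cref{Thm:Consistency_Consistency_RiskLp} is only applied to the \emph{deterministic} sequence $(\ftheon)$. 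Your route applies it to the random sequence $(\fempnn)$, which is fine but strictly speaking needs a subsequence argument.

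The real gap is in your handling of the estimation term and the improved exponent. The paper does \emph{not} use a uniform empirical-process bound over a Hilbert ball; it uses the stability bound of \citet[Corollary~5.11]{steinwart2008}, which linearizes the problem: there exist functions $h_n$ (coming from the subdifferential of the loss at $\ftheon$) with $\normH{\fempnn-\ftheon}\le\lbn^{-1}\normH{\ew[\DVertn]{h_n\Phi}-\ew[\P]{h_n\Phi}}$ and $\|h_n\|_{L_s(\P)}\le C\lbn^{-(p-1)/2}$ for $s=p/(p-1)$. This reduces the estimation error to the deviation of a \emph{single} Hilbert-space-valued i.i.d.\ average, to which the moment inequality \citet[Lemma~9.2]{steinwart2008} is applied with $q=p/(p-1)$ (or $q=2$ if $p=1$), giving $q^*=\min\{1/2,1/p\}$ and hence exactly $p^*=(p+1)/(2q^*)=\max\{p+1,p(p+1)/2\}$.

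Your sketch instead proposes a uniform bound over the Hilbert ball, refined by truncation at $\{|Y|\le\lbn^{-1/2}\}$ and a Bernstein inequality. This is precisely the kind of argument that produces the cruder exponent $\max\{2p,p^2\}$ in \citet[Theorem~12]{christmann2007}; you assert that ``tracking the resulting powers of $\lbn$ gives exactly the condition $\lbn^{p^*}n\to\infty$'' but provide no computation, and it is far from clear that truncation plus Bernstein over a ball recovers the sharper exponent without the linearization step. In short: the improvement in $p^*$ is the whole point of the theorem, and the mechanism that delivers it (Corollary~5.11 plus Lemma~9.2 with the carefully chosen moment order) is missing from your proposal.
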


\begin{mybem}
	The conditions on \H in \Cref{Thm:Consistency_SVMs_LpCons} can be difficult to check directly. However, if \X is separable, the separability of \H immediately follows whenever \k is continuous (\vgl \citeauthor{berlinet2004}, \citeyear{berlinet2004}, Corollary 4
	) and it suffices to verify this continuity instead. For example, the commonly used Gaussian RBF kernel (among many other kernels) satisfies this continuity, and since additionally its RKHS is dense in \Lppx \citep[\vgl][Theorem 4.63]{steinwart2008}, the RKHS satisfies both conditions from \Cref{Thm:Consistency_SVMs_LpCons}.
\end{mybem}

As we successfully slightly reduced the conditions regarding $(\lbn)_{n\in\N}$ compared to the referenced result on risk consistency, we can now transfer this slight relaxation back from \Lp-consistency to risk consistency by using \Cref{Thm:Consistency_Consistency_LpRisk}:

\begin{mykor}\label{Cor:Consistency_SVMs_RiskCons}
	Let $\loss\colon\YR\to[0,\infty)$ be a convex, distance-based loss function of growth type $p\in[1,\infty)$. Let $\H\subseteq\Lppx$ dense and separable be the RKHS of a bounded and measurable kernel \k. Assume that \fbayes is \Px-\fs unique and $|\P|_p<\infty$. Define $p^*:=\max\{p+1,p(p+1)/2\}$. If the sequence $(\lb_n)_{n\in\N}$ satisfies $\lb_n>0$ for all $n\in\N$ as well as $\lb_n\to 0$ and $\lb_n^{p^*}n\to\infty$ for $n\to\infty$, then
	\begin{equation*}
		\limn \risk(\fempnn) = \riskbayes \qquad \text{in probability $\P^\infty$.}
	\end{equation*}
\end{mykor}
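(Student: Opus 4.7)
The plan is to combine \Cref{Thm:Consistency_SVMs_LpCons} with \Cref{Thm:Consistency_Consistency_LpRisk} via a standard subsequence argument to turn the deterministic implication ``\Lp-convergence $\Rightarrow$ risk convergence'' into a statement about convergence in probability.

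First, I would check that the setup of \Cref{Thm:Consistency_Consistency_LpRisk} applies to the sequence $(\fempnn)_{n\in\N}$ and to $\fstar=\fbayes$. Since $\H\subseteq\Lppx$ and $\fempnn\in\H$ for every $n\in\N$, we have $(\fempnn)_{n\in\N}\subseteq\Lppx$. Moreover, by \Cref{Bem:Consistency_Consistency_RiskLpsAlternativbedingungen}, the moment condition $|\P|_p<\infty$ combined with \loss being of growth type $p$ implies $\fbayes\in\Lppx$ and $\riskbayes<\infty$. In particular, $\risk(\fbayes)=\riskbayes$, so the hypotheses of \Cref{Thm:Consistency_Consistency_LpRisk} are met.

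Second, \Cref{Thm:Consistency_SVMs_LpCons} directly yields $\normLppx{\fempnn-\fbayes}\to 0$ in probability $\P^\infty$. The only real work is converting this to convergence in probability of the risks, which I would do by the subsequence principle: given any subsequence $(n_k)_{k\in\N}$, the $\Lppx$-convergence in probability guarantees the existence of a further subsequence $(n_{k_j})_{j\in\N}$ along which $\normLppx{\fempn[n_{k_j}]-\fbayes}\to 0$ $\P^\infty$-almost surely. Applying \Cref{Thm:Consistency_Consistency_LpRisk} pointwise (on the set of data sequences where this almost sure convergence holds) gives $\risk(\fempn[n_{k_j}])\to\risk(\fbayes)=\riskbayes$ $\P^\infty$-\fs. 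Since every subsequence of $(\risk(\fempnn)-\riskbayes)_{n\in\N}$ has a further subsequence tending to zero almost surely, the original sequence converges to zero in probability, which is the desired conclusion.

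I do not foresee a serious obstacle here: the work was already done in the \Lp-consistency statement of \Cref{Thm:Consistency_SVMs_LpCons}, and the rest is pure bookkeeping combined with the deterministic ``\Lp implies risk'' direction of \Cref{Thm:Consistency_Consistency_LpRisk}. The only mildly delicate point is making sure that the application of \Cref{Thm:Consistency_Consistency_LpRisk} is legitimate along the almost-surely convergent subsequence, which is handled by noting that the relevant set of data sequences has full measure and that on this set the (deterministic) hypotheses of \Cref{Thm:Consistency_Consistency_LpRisk} are satisfied.
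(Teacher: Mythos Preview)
Your proposal is correct and follows the same approach as the paper, which simply states that the assertion follows directly from \Cref{Thm:Consistency_SVMs_LpCons} and \Cref{Thm:Consistency_Consistency_LpRisk}. Your additional subsequence argument is a welcome explicit justification of how the deterministic implication in \Cref{Thm:Consistency_Consistency_LpRisk} is upgraded to convergence in probability, a step the paper leaves implicit.
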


Alas, the slight relaxation of the mentioned condition regarding the regularization parameters also comes along with an additional condition compared to \citet[Theorem 12]{christmann2007}: \Cref{Cor:Consistency_SVMs_RiskCons} requires \fbayes to be \Px-\fs unique. Thus, \Cref{Cor:Consistency_SVMs_RiskCons} pays for the slight relaxation in one condition by introducing this new additional condition and should therefore not be seen as a replacement of Theorem 12 from \citet{christmann2007} but as an addition instead.

\subsection{Consistency of regularized kernel methods based on shifted loss functions}\label{SubSec:Consistency_SVMs_Shift}

SVMs based on shifted loss functions can be defined analogously as in the non-shifted case in \eqref{eq:Consistency_SVMs_empSVM} and \eqref{eq:Consistency_SVMs_theoSVM}. \citet{christmann2009} proved that SVMs using Lipschitz continuous shifted loss functions inherit many of the desirable properties from their non-shifted counterparts, even without requiring the moment condition. These results include existence, uniqueness, representation and statistical robustness as well as risk consistency. Furthermore, they showed that $\fshifttheo=\ftheo$ whenever \ftheo uniquely exists. 

The natural hope that \Cref{Thm:Consistency_SVMs_LpCons} can be transferred to the shifted case similarly, thus also ridding it of the moment condition, might have already decreased because of the negative results from \Cref{SubSec:Consistency_Consistency_Shift}. As SVMs are always contained in some RKHS \H, one might however still hope that counterexamples like the ones from these results' proofs can not occur in such RKHSs because of the additional structure they possess compared to \Lapx.\footnote{The associated kernel \k being bounded and measurable implies that all $f\in \H$ are bounded and measurable as well, and hence that $\H\subseteq\Lapx$, \vgl \citet[Lemma~4.23~and~4.24]{steinwart2008}.}
Alas, Sobolev spaces like the ones considered in \Cref{Cor:Consistency_Consistency_ShiftGegenbeispielSobolevRiskLp} are also RKHSs if one chooses a suiting kernel like for example the ones found in \citet{wu1995,wendland2005}, which are classical examples of kernels with compact support. Hence, we obtain the following:

\begin{mykor}\label{Cor:Consistency_SVMs_ShiftGegenbeispielRKHS}
	Let $\loss\colon\YR\to[0,\infty)$ be a convex, distance-based and symmetric loss function of growth type 1, or the $\tau$-pinball loss for some $\tau\in(0,1)$. Let \lossshift be its shifted version. Then, even if \H is the RKHS of a bounded and measurable kernel \k, \fshiftbayes is \Px-\fs unique with $\fshiftbayes\in\Lapx$, a sequence $(f_n)_{n\in\N}\subseteq H$ of functions satisfying 
	\begin{align*}
		\limn \riskshift(f_n) = \riskshiftbayes
	\end{align*}
	does in general \textbf{not} imply 
	\begin{align*}
		\limn \normLapx{f_n - \fshiftbayes} = 0
	\end{align*}
	without any additional assumptions besides \Cref{Ann:Consistency_Pre_AllgAnn} being imposed.
\end{mykor}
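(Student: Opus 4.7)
The plan is to reduce the statement to \Cref{Cor:Consistency_Consistency_ShiftGegenbeispielSobolevRiskLp} by exhibiting a bounded, measurable kernel whose RKHS coincides with (or is continuously embedded into) a Sobolev space of the form $\sobolevX[m,2]$. The pointer provided by the paragraph preceding the corollary is that compactly supported radial kernels of the Wu or Wendland type have this property: for a suitable choice of the smoothness order of the Wendland function (depending on $d$ and $m$), the RKHS $\H$ of the associated kernel $\k$ on $\R^d$ is, up to an equivalent norm, equal to $W^{m,2}(\R^d)$, and restriction to an open $\X\subseteq\R^d$ gives an RKHS equivalent to $\sobolevX[m,2]$. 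Such kernels are continuous with compact support, hence bounded and measurable.

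With this kernel fixed, I would first choose the parameters in \Cref{Cor:Consistency_Consistency_ShiftGegenbeispielSobolevRiskLp} as $q=2$ and $m$ as above, and $d$ matching the dimension of $\X$. The corollary then yields a probability measure $\P$ (satisfying \Cref{Ann:Consistency_Pre_AllgAnn}) for which \fshiftbayes is \Px-\fs unique and in \Lapx, together with a sequence $(\fn)_{n\in\N}\subseteq \sobolevX[m,2]\cap\Lapx$ such that $\limn\riskshift(\fn)=\riskshiftbayes$ but $\normLapx{\fn-\fshiftbayes}\not\to 0$. Since $\sobolevX[m,2]=\H$ by the choice of \k, the whole sequence lies in $\H$; and because \k is bounded and measurable we automatically have $\H\subseteq\Lapx$ (as noted in the corollary's footnote), so no compatibility issue arises with the $\Lapx$-membership stipulated in \Cref{Cor:Consistency_Consistency_ShiftGegenbeispielSobolevRiskLp}. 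This already provides the required counterexample: risk consistency under \lossshift holds, but \La-consistency fails, inside a genuine RKHS of a bounded measurable kernel.

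The main obstacle I expect is of an administrative rather than conceptual nature, namely the precise identification of the RKHS of the chosen compactly supported kernel with the Sobolev space appearing in \Cref{Cor:Consistency_Consistency_ShiftGegenbeispielSobolevRiskLp}. Depending on the choice of $\X$ (open, possibly unbounded) and on the exact Wendland/Wu function used, the identification $\H\cong\sobolevX[m,2]$ is typically stated either on all of $\R^d$ or on bounded domains with sufficiently regular boundary, so one needs to either take $\X$ to be such a domain (which is compatible with \Cref{Cor:Consistency_Consistency_ShiftGegenbeispielSobolevRiskLp} since that result only requires openness of $\X\subseteq\R^d$) or invoke the $\R^d$-version followed by restriction. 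Once this matching of domains and smoothness orders is spelled out, the counterexample of \Cref{Cor:Consistency_Consistency_ShiftGegenbeispielSobolevRiskLp} can be transported verbatim into the RKHS setting, completing the proof.
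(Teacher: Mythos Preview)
Your proposal is correct and follows essentially the same route as the paper: the paper also picks a compactly supported Wendland kernel (specifically $k_{1,1}$ with $\phi_{1,1}(r)\propto(1-r)_+^3(3r+1)$), cites that its RKHS equals $\sobolevX[2,2]$, notes that it is bounded and measurable, and then invokes \Cref{Cor:Consistency_Consistency_ShiftGegenbeispielSobolevRiskLp}. Your ``administrative obstacle'' about domain matching is indeed just that---the paper handles it by directly citing \citet[Theorem 10.35]{wendland2005} without further discussion.
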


As the (probably) most commonly used RKHSs for computing SVMs are those of the \textit{Gaussian RBF kernels} $\k_\gamma$, $\gamma\in(0,\infty)$, defined by
\begin{align*}
	\k_\gamma(x,x') := \exp\left(-\frac{\norm{2}{x-x'}^2}{\gamma^2}\right) \qquad \forall\, x,x'\in\X\,,
\end{align*}
we also want to take a special look at these. After proving in \Cref{Cor:Consistency_SVMs_ShiftGegenbeispielRKHS} that RKHSs, in which \La-consistency does not follow from risk consistency, do in fact exist, we see in the subsequent \Cref{Cor:Consistency_SVMs_ShiftGegenbeispielGauss} that this phenomenon can not only occur for kernels whose RKHS is a Sobolev space but also for that of the Gaussian RBF kernel.

\begin{mykor}\label{Cor:Consistency_SVMs_ShiftGegenbeispielGauss}
	Let $\loss\colon\YR\to[0,\infty)$ be a convex, distance-based and symmetric loss function of growth type 1, or the $\tau$-pinball loss for some $\tau\in(0,1)$. Let \lossshift be its shifted version. Let $\gamma\in(0,\infty)$ and $\H_\gamma$ be the RKHS of the Gaussian RBF kernel $\k_\gamma$. Then, even if \fshiftbayes is \Px-\fs unique with $\fshiftbayes\in\Lapx$, a sequence $(f_n)_{n\in\N}\subseteq \H_\gamma$ of functions satisfying 
	\begin{align*}
		\limn \riskshift(f_n) = \riskshiftbayes
	\end{align*}
	does in general \textbf{not} imply 
	\begin{align*}
		\limn \normLapx{f_n - \fshiftbayes} = 0
	\end{align*}
	without any additional assumptions besides \Cref{Ann:Consistency_Pre_AllgAnn} being imposed.
\end{mykor}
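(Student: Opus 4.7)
The plan is to lift the counterexamples from \Cref{Prop:Consistency_Consistency_ShiftGegenbeispielRiskLp} (symmetric case) and \Cref{Prop:Consistency_Consistency_ShiftGegenbeispielPinRiskLp} (pinball case) into the Gaussian RKHS $\H_\gamma$ by an \La-approximation step, in the same spirit as how \Cref{Cor:Consistency_Consistency_ShiftGegenbeispielSobolevRiskLp} transported them into Sobolev spaces. Take \X and \P as produced by the relevant one of these two propositions, arranging \X to be a closed subset of $\R^d$ (so that $\k_\gamma$ is defined on \XX) and $\Px$ to be a finite Borel measure on \X. Denote by $(g_n)_{n\in\N}\subseteq\Lapx$ the counterexample sequence they provide: \fshiftbayes is \Px-\fs unique with $\fshiftbayes\in\Lapx$, $\limn\riskshift(g_n)=\riskshiftbayes$, while $\normLapx{g_n-\fshiftbayes}$ stays bounded away from $0$.

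\textbf{Construction and verification.} Under this setup, $\H_\gamma$ is dense in $\Lapx$ by the universality of the Gaussian RBF kernel (see, \eg the discussion surrounding Theorem~4.63 in \citet{steinwart2008}), so one may pick, for each $n\in\N$, a function $f_n\in\H_\gamma$ with $\normLapx{f_n-g_n}\le 1/n$. Since \lossshift arises from a convex, distance-based loss of upper growth type $1$, \Cref{Thm:Consistency_Consistency_Shift_L1Risk} applies and yields $|\riskshift(f_n)-\riskshift(g_n)|\to 0$, whence $\riskshift(f_n)\to\riskshiftbayes$. The reverse triangle inequality simultaneously delivers
\begin{align*}
    \normLapx{f_n-\fshiftbayes}\ \ge\ \normLapx{g_n-\fshiftbayes}\,-\,\normLapx{f_n-g_n}\ \ge\ \normLapx{g_n-\fshiftbayes}\,-\,1/n\,,
\end{align*}
so $\normLapx{f_n-\fshiftbayes}$ also stays bounded away from $0$, giving both required properties.

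\textbf{Main obstacle.} The only genuinely delicate point is to make sure that the counterexample distributions from \Cref{Prop:Consistency_Consistency_ShiftGegenbeispielRiskLp,Prop:Consistency_Consistency_ShiftGegenbeispielPinRiskLp} can indeed be realized (or adapted) so that $\Px$ is a finite Borel measure on some closed $\X\subseteq\R^d$, since this is precisely what unlocks the density of $\H_\gamma$ in $\Lapx$ that the construction relies on. Because the failure of \La-consistency in those proofs is driven by a ``blow-up'' of the sequence on $\Px$-shrinking sets that the shifted loss cannot penalize in the limit, the mechanism is essentially insensitive to the specific choice of $\Px$; one can therefore, without changing its structure, set up the counterexample on, say, $\X=\R$ with $\Px$ equivalent to a compactly supported Lebesgue density, which satisfies the density hypothesis outright. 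This closes the remaining gap between the Sobolev-space construction of \Cref{Cor:Consistency_Consistency_ShiftGegenbeispielSobolevRiskLp} and the Gaussian RBF setting considered here.
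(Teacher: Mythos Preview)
Your proof is correct and follows the same overall strategy as the paper: approximate the counterexample sequence by functions in $\H_\gamma$, then verify that the approximation preserves risk consistency while keeping the $\La$-distance to $\fshiftbayes$ bounded away from zero. The paper differs only in one technical detail: it approximates in the sup-norm via universality (\citet[Definition~4.52, Corollary~4.58]{steinwart2008}), which requires the $g_n$ to be continuous and is why it starts from the smoothed functions of \Cref{Cor:Consistency_Consistency_ShiftGegenbeispielSobolevRiskLp} rather than the step functions of \Cref{Prop:Consistency_Consistency_ShiftGegenbeispielRiskLp,Prop:Consistency_Consistency_ShiftGegenbeispielPinRiskLp}; your $\La$-approximation via \citet[Theorem~4.63]{steinwart2008} sidesteps this and lets you work directly with the original step functions. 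One small caveat: \Cref{Thm:Consistency_Consistency_Shift_L1Risk} is stated for a fixed limit $\fstar$, not a moving target $g_n$, so strictly speaking you should invoke the pointwise Lipschitz bound $|\riskshift(f)-\riskshift(g)|\le|\loss|_1\,\normLapx{f-g}$ established in its proof---which of course gives your conclusion immediately. Your ``main obstacle'' is also not really an obstacle: $\Px$ is automatically a probability (hence finite Borel) measure, and the footnotes in the proofs of \Cref{Prop:Consistency_Consistency_ShiftGegenbeispielRiskLp,Prop:Consistency_Consistency_ShiftGegenbeispielPinRiskLp} already explain how to realize the construction on $\X=[0,1]$ or $\X=\R$.
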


The previous results show that \La-consistency of SVMs using shifted loss functions does in general not follow from their risk consistency, with the latter being known from \citet[Theorem~8]{christmann2009}. Note that it might still be possible for such SVMs to be \La-consistent for different reasons though. 

At least in the special case of the shifted pinball loss, we found some alternative conditions to replace---and in many situations weaken---the moment condition from \Cref{Thm:Consistency_Consistency_ShiftPinRiskLa}. With this, we can now at least deduce \La-consistency of SVMs using this shifted pinball loss without needing to impose the moment condition:

\begin{mykor}\label{Cor:Consistency_SVMs_ShiftPinLaCons}
	Let $\tau\in(0,1)$ and \losspinshift be the shifted $\tau$-pinball loss. Let $\H\subseteq\Lapx$ dense and separable be the RKHS of a bounded and measurable kernel \k. Assume that \ftaubayes is \Px-\fs unique, $\ftaubayes\in \Lapx$ and \P additionally satisfies at least one of the additional conditions (i) and (ii) from \Cref{Thm:Consistency_Consistency_ShiftPinRiskLa}. If the sequence $(\lb_n)_{n\in\N}$ satisfies $\lb_n>0$ for all $n\in\N$ as well as $\lb_n\to 0$ and $\lb_n^2n\to\infty$ for $n\to\infty$, then
	\begin{equation*}
		\limn ||\ftaushiftempn-\ftaubayes||_{\Lapx} = 0 \qquad \text{in probability $\P^\infty$.}
	\end{equation*}
\end{mykor}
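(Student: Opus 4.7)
The plan is a two-step reduction: first, establish risk consistency of the SVMs $\ftaushiftempn$ under the stated hypotheses, and second, transfer this risk consistency to \La-consistency by invoking \Cref{Thm:Consistency_Consistency_ShiftPinRiskLa}.

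For the first step, the $\tau$-pinball loss is convex, distance-based, and of growth type $1$, and is therefore Lipschitz continuous (\vgl \Cref{Bem:Consistency_Consistency_Lipschitz}); its shifted version \losspinshift thus falls into the setting of \citet[Theorem~8]{christmann2009}, which yields risk consistency of SVMs based on Lipschitz continuous shifted loss functions \textbf{without} requiring any moment condition on \P. All further prerequisites of that cited theorem are ensured by the present corollary's hypotheses: the RKHS \H stems from a bounded and measurable kernel and is dense in \Lapx, and the regularization parameter sequence satisfies $\lbn>0$, $\lbn\to 0$, and $\lbn^2 n\to\infty$. Applying this result yields
\begin{align*}
	\limn \risk[\losspinshift,\P](\ftaushiftempn) = \risk[\losspinshift,\P]^* \qquad \text{in probability $\P^\infty$.}
\end{align*}
The finiteness of $\risk[\losspinshift,\P]^*$ required for this statement to be informative follows from $\ftaubayes\in\Lapx$ together with the Lipschitz bound \eqref{eq:Consistency_Consistency_ShiftRiskFinite}.

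For the second step, the hypotheses on \ftaubayes and on \P match precisely those of \Cref{Thm:Consistency_Consistency_ShiftPinRiskLa}, and the SVMs $\ftaushiftempn$ lie in $\H\subseteq\Lapx$ (the inclusion being a consequence of \k being bounded and measurable). Since \Cref{Thm:Consistency_Consistency_ShiftPinRiskLa} provides a \emph{deterministic} implication---risk convergence to the Bayes risk implies \La-convergence to \ftaubayes for any sequence in \Lapx---we lift it to a convergence-in-probability statement via the standard subsequence principle: from any subsequence of $(\ftaushiftempn)_{n\in\N}$, extract a further subsequence along which the in-probability risk convergence becomes $\P^\infty$-almost-sure convergence; then, for each $\omega$ in the almost-sure set, \Cref{Thm:Consistency_Consistency_ShiftPinRiskLa} applied pathwise yields $\normLapx{\ftaushiftempn-\ftaubayes}\to 0$, so the \La-norms converge almost surely along the sub-subsequence. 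Since every subsequence admits such a further almost-surely convergent sub-subsequence, we obtain $\normLapx{\ftaushiftempn-\ftaubayes}\to 0$ in probability $\P^\infty$, as claimed.

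Because the proof is essentially the combination of these two ingredients, no substantive obstacle is expected; the only care point is the subsequence argument lifting the deterministic conclusion of \Cref{Thm:Consistency_Consistency_ShiftPinRiskLa} to convergence in probability, and this is entirely routine once pathwise applicability has been justified.
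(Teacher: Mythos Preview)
Your proposal is correct and follows essentially the same approach as the paper: invoke \citet[Theorem~8]{christmann2009} for risk consistency of the shifted-pinball SVMs, then apply \Cref{Thm:Consistency_Consistency_ShiftPinRiskLa} to upgrade to \La-consistency. The paper's proof is terser and does not spell out the subsequence argument you include for lifting the deterministic implication to convergence in probability, but that step is indeed routine and implicitly assumed there.
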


\begin{mybem}
	It would be possible to use \Cref{Cor:Consistency_SVMs_ShiftPinLaCons} to derive a result on risk consistency of SVMs which are based on the shifted pinball loss, similarly to what we did in the non-shifted case in \Cref{SubSec:Consistency_SVMs_Reg}, where we used \Cref{Thm:Consistency_SVMs_LpCons} to derive \Cref{Cor:Consistency_SVMs_RiskCons}. In the latter result, we however only achieved an actual improvement (over already existing results) regarding the conditions on the regularization parameters if the loss function is of growth type $p>1$. Similarly, a result on risk consistency which is based on \Cref{Cor:Consistency_SVMs_ShiftPinLaCons} would offer no benefit over Theorem 8 from \citet{christmann2009} because of the pinball loss being of growth type 1. 
\end{mybem}

\section{Discussion}\label{Sec:Consistency_Discussion}

This paper considerably generalized existing results regarding the close relationship between \Lp- and risk consistency by deriving results which are applicable to a wide range of loss functions. We additionally tried to eliminate the moment condition from the results connecting \Lp- and risk consistency by switching to shifted loss functions. Somewhat surprisingly, this only worked for one of the two directions (risk consistency following from \Lp-consistency), but in general not for the reverse. We proved that it is indeed not possible to infer \Lp-consistency from risk consistency if neither some standard moment condition nor some suitable alternative condition holds true. 

In case of using the shifted pinball loss, which can be used for quantile regression, we derived such an alternative condition, which is in many cases considerably weaker than the moment condition, thus still gaining some benefit from switching to shifted loss functions. It remains to be seen whether similar alternative conditions can also be derived for different loss functions or whether it might even be possible to derive a general alternative condition applicable to a wider array of loss functions. 

Lastly, we applied our results to regularized kernel methods. By doing so, we proved their \Lp-consistency in considerably greater generality than it had been known so far, and we slightly reduced a condition from results on their risk consistency from the literature.

\section*{Acknowledgments}
I would like to thank Andreas Christmann for helpful discussions on this topic.

\appendix

\section{Proofs}

\subsection{Proofs for Section \ref{SubSec:Consistency_Consistency_Reg}}

\begin{proof}[Proof of \Cref{Thm:Consistency_Consistency_RiskLp}]
	Let $g_n\colon \XY\to[0,\infty), (x,y)\mapsto\loss(y,\fn(x))$ for $n\in\N$, and $g^*\colon\XY\to[0,\infty), (x,y)\mapsto\loss(y,\fbayes(x))$. According to 
	\citet[Corollary~3.62]{steinwart2008}---where it is easy to see that we do not need the assumption of the sets $\mathcal{M}_{\loss,\Pbed{x},x}$ being singletons since we already know that \fbayes \Px-\fs uniquely exists\textemdash, we have $\fn\xlongrightarrow{\Px}\fbayes$. Thus, because of the continuous mapping theorem and the continuity of \loss, we also have $g_n\xlongrightarrow{\P}g^*$. Since
	\begin{align}\label{eq:ProofThm_Consistency_Consistency_RiskLp_LGleichgrInt}
		\limn \int |g_n|\diff\P = \limn \int g_n \diff\P &= \limn \risk(\fn)\notag\\ 
		&= \risk(\fbayes) = \int g^* \diff\P = \int |g^*| \diff\P\,,
	\end{align}
	the sequence $(|g_n|)_{n\in\N}$ is thus equi-integrable according to \citet[Theorem 21.7]{bauer2001}. That theorem can be applied because $\risk(\fbayes)<\infty$, and hence $\risk(\fn)<\infty$ for $n$ sufficiently large because of \eqref{eq:ProofThm_Consistency_Consistency_RiskLp_LGleichgrInt}, and therefore $g^*\in\Lapx$ and $g_n\in\Lapx$ for $n$ sufficiently large.
	
	Because of \loss being of lower growth type $p$, there now exists a constant $c>0$ such that
	\begin{align}\label{eq:ProofThm_Consistency_Consistency_RiskLp_fGleichgrInt}
		|\fn(x)-\fbayes(x)|^p &\le \max\left\{(2|y-\fn(x)|)^p \,,\, (2|y-\fbayes(x)|)^p \right\}\notag\\ 
		&\le 2^p\cdot \max\left\{c^{-1}\big(\loss(y,\fn(x))+1\big)\,,\, c^{-1}\big(\loss(y,\fbayes(x))+1\big) \right\}\notag\\
		&= \frac{2^p}{c} \cdot \big( \max\left\{g_n(x,y), g^*(x,y) \right\} +1\big)\notag\\
		&\le \frac{2^p}{c} \cdot \big( g_n(x,y) + g^*(x,y) +1\big) \qquad \forall\, (x,y,n)\in\XY\times\N\,,
	\end{align}
	since $g_n$, $n\in\N$, and $g^*$ are non-negative.
	
	As $(|g_n|)_{n\in\N}$ is equi-integrable, and $g^*\in\Lapx$ and hence also equi-integrable \citep[\vgl][part 2 of the example on \seite 122]{bauer2001}, every summand occurring on the right hand side of \eqref{eq:ProofThm_Consistency_Consistency_RiskLp_fGleichgrInt} is equi-integrable (as a sequence in $n$). By employing the example on \seite 121 of \citet{bauer2001} as well as Corollary 21.3 from the same book, we hence obtain equi-integrability of the whole right hand side (as a sequence in $n$).
	
	Thus, the sequence $(|\fn-\fbayes|^p)_{n\in\N}$ is equi-integrable as well and \Lp-convergence of \fn to \fbayes, follows from \citet[Theorem 21.7]{bauer2001}. 
\end{proof}

\begin{proof}[Proof of \Cref{Thm:Consistency_Consistency_LpRisk}]
	Since $\normLppx{f_n-f^*}\to0$, we also have $f_n\xlongrightarrow{\Px}f^*$, and \citet[Theorem 21.7]{bauer2001} yields equi-integrability of the sequence $(|f_n|^p)_{n\in\N}$. Let $g_n\colon\XY\to[0,\infty), (x,y)\mapsto\loss(y,f_n(x))$ for $n\in\N$, and $g^*\colon\XY\to[0,\infty), (x,y)\mapsto\loss(y,f^*(x))$. Because of \loss being of upper growth type $p$, there then exists a $c>0$ such that 
	\begin{align}\label{eq:ProofThm_Consistency_Consistency_LpRisk_gleichgrInt}
		|g_n(x,y)| = g_n(x,y) = \loss(y,f_n(x)) &\le c \cdot \left( \left|y-f_n(x) \right|^p + 1 \right)\notag\\ 
		&\le c \cdot \left( 2^p\cdot \left(|y|^p + |f_n(x)|^p \right) + 1 \right) 
	\end{align}
	for all $ (x,y,n)\in\XY\times\N$.
	
	Since every summand on the right hand side of \eqref{eq:ProofThm_Consistency_Consistency_LpRisk_gleichgrInt} is equi-integrable (because $|\P|_p<\infty$), the whole right hand side is equi-integrable as well (as a sequence in $n$) by the example on \seite 121 of \citet{bauer2001} and Corollary 21.3 from the same book. Hence, the sequence $(|g_n|)_{n\in\N}$ is equi-integrable as well. 
	
	Additionally, $g_n\xlongrightarrow{\P}g^*$ because of $f_n\xlongrightarrow{\Px}f^*$ and the continuous mapping theorem in combination with the continuity of \loss, and thus, \citet[Theorem 21.7]{bauer2001} yields 
	\begin{equation*}
		\limn \risk(f_n) = \limn\int g_n\diff\P = \limn\int |g_n|\diff\P = \int |g^*|\diff\P = \int g^*\diff\P = \risk(f^*)\,. 
	\end{equation*}
\end{proof}

\subsection{Proofs for Section \ref{SubSec:Consistency_Consistency_Shift}}

Before proving \Cref{Prop:Consistency_Consistency_ShiftGegenbeispielRiskLp}, we first need the following auxiliary lemma:

\begin{mylem}\label{Lem:Consistency_Consistency_AuxRiskFinite}
	Let $\loss\colon\XYR\to[0,\infty)$ be a convex and Lipschitz continuous loss function, and let \lossshift be its shifted version. If there exists a measurable function $f\colon\X\to\R$ satisfying $\riskshift(f)=-\infty$, there also exists a measurable function $g\colon\X\to\R$ satisfying $\Px(g\ne0)>0$ and $\riskshift(g)\in(-\infty,0]$.
\end{mylem}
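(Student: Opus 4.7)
The plan is to construct $g$ as a suitable truncation of $f$. Specifically, I would set $A_n := \{x\in\X : |f(x)|\le n\}$ for $n\in\N$ and define $f_n := f\cdot \mathds{1}_{A_n}$. Since $|f_n|\le n$ everywhere and \loss is Lipschitz continuous with constant $|\loss|_1$, the pointwise bound $|\lossshift(x,y,f_n(x))| = |\loss(x,y,f_n(x))-\loss(x,y,0)| \le |\loss|_1\cdot n$ holds for all $(x,y)\in\XY$, so in particular $f_n\in\Lapx$ and $\riskshift(f_n)$ is finite for every $n\in\N$.

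The main step is to show that $\riskshift(f_n)\to -\infty$ as $n\to\infty$, which then guarantees $\riskshift(f_n)\le 0$ for all sufficiently large $n$. Using $\lossshift(x,y,0)=0$, one has $\lossshift(x,y,f_n(x)) = \lossshift(x,y,f(x))\cdot \mathds{1}_{A_n}(x)$. Interpreting $\riskshift(f)=-\infty$ in the standard Lebesgue sense, the positive part $\int (\lossshift(x,y,f(x)))^+\diff\P(x,y)$ must be finite while the negative part $\int (\lossshift(x,y,f(x)))^-\diff\P(x,y)$ equals $\infty$. Since $A_n\uparrow\X$, dominated convergence applied to the positive part and monotone convergence applied to the negative part together yield $\riskshift(f_n)\to -\infty$.

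It remains to ensure $\Px(f_n\ne 0)>0$ for the chosen $n$. Since $\riskshift(0)=0\ne -\infty =\riskshift(f)$, the function $f$ cannot be \Px-\fs\ equal to zero, so $\Px(f\ne 0)>0$. Continuity of the measure from below along $\{f\ne 0\}\cap A_n\uparrow \{f\ne 0\}$ then gives $\Px(f_n\ne 0)=\Px(\{f\ne 0\}\cap A_n)\to \Px(f\ne 0)>0$. Picking $n$ large enough that both $\riskshift(f_n)\le 0$ and $\Px(f_n\ne 0)>0$ hold, the function $g:=f_n$ satisfies all requirements of the lemma.

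I do not anticipate any real obstacle; the only mildly delicate point is the careful splitting into positive and negative parts needed to make proper sense of $\riskshift(f)=-\infty$ and to apply the two convergence theorems separately. Beyond that, the argument is a routine truncation plus continuity-of-measure argument exploiting the Lipschitz bound to secure finiteness of $\riskshift(f_n)$.
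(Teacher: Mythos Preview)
Your proof is correct and in fact takes a cleaner route than the paper's. The paper works at the level of inner risks: it shows that $\int \innerriskshiftbed^-(f(x))\diff\Px(x)=\infty$, extracts a set $A$ of positive \Px-measure on which $\innerriskshiftbed(f(x))\le -c<0$, observes that $|f|\ge c/|\loss|_1$ on $A$ by Lipschitz continuity, and then defines $g(x):=\tfrac{c}{|\loss|_1}\,\text{sign}(f(x))$ on $A$ and $0$ elsewhere. Convexity of the inner risks is then used to argue $\innerriskshiftbed(g(x))\le 0$ on $A$, giving $\riskshift(g)\in[-c,0]$. Your approach instead truncates $f$ on the level sets $A_n=\{|f|\le n\}$ and lets monotone/dominated convergence push $\riskshift(f_n)\to-\infty$; no convexity enters, only the Lipschitz bound (to ensure each $\riskshift(f_n)$ is finite). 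So your argument is both shorter and formally more general, applying to Lipschitz losses that need not be convex. The paper's construction, on the other hand, produces an explicit bounded $g$ together with a quantitative bound $\riskshift(g)\in[-c,0]$, which your truncation does not directly yield---but neither is needed for the lemma as stated or for its sole application in \Cref{Prop:Consistency_Consistency_ShiftGegenbeispielRiskLp}.
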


\begin{proof}
	If we denote the inner risk by
	\begin{align*}
		\innerriskshiftbed\colon\, \R\to\R\cup\{-\infty,+\infty\}\,,\, t\mapsto\int_\Y \lossshift(x,y,t)\diff\Pbed[y]{x}\,,
	\end{align*}
	we have 
	\begin{align*}
		\riskshift(f) &= \int \lossshift(x,y,f(x)) \diff\P(x,y)
		= \int \innerriskshiftbed(f(x))\diff\Px(x)\\
		&= \int \innerriskshiftbed^+(f(x))\diff\Px(x) - \int \innerriskshiftbed^-(f(x))\diff\Px(x)
		= -\infty\,,
	\end{align*}
	with $\innerriskshiftbed^+:=\max\{\innerriskshiftbed\,,\,0\}$ and $\innerriskshiftbed^-:=\max\{-\innerriskshiftbed\,,\,0\}$ denoting the positive and the negative part of $\innerriskshiftbed$ respectively. From the definition of the integral, we hence obtain
	\begin{align}\label{eq:ProofLem_Consistency_Proofs_AuxRiskFinite_NegativePart}
		\int \innerriskshiftbed^-(f(x))\diff\Px(x) = \infty\,
	\end{align}
	and therefore the existence of $c\in(0,\infty)$ and $A\subseteq\X$ measurable such that $\Px(A)>0$ and $\innerriskshiftbed^-(f(x)) \ge c$ for all $x\in A$. 
	
	We further know that $|\loss|_1>0$ because it is clear from the definition of Lipschitz continuous loss functions (\vgl \Cref{Bem:Consistency_Consistency_Lipschitz}) that $|\loss|_1=0$ would imply $\loss(x,y,f(x))=\loss(x,y,0)$ for all $(x,y)\in\XY$ and hence $\riskshift(f)=0$, which contradicts our assumptions. Therefore, \eqref{eq:ProofLem_Consistency_Proofs_AuxRiskFinite_NegativePart} directly implies that $|f(x)|\ge \frac{c}{|\loss|_1}>0$ for all $x\in A$ because otherwise
	\begin{align*}
		\innerriskshiftbed^-(f(x)) 
		&= \left(\int \lossshift(x,y,f(x)) \diff\Px(x)\right)^-
		\le \int \big|\lossshift(x,y,f(x))\big| \diff\Px(x)\\
		&= \int \big|\loss(x,y,f(x)) - \loss(x,y,0)\big| \diff\Px(x)
		\le |\loss|_1 \cdot |f(x)|
		< c\,,
	\end{align*}
	which would form a contradiction to $x$ coming from $A$.
	
	Define 
	\begin{align*}
		g(x) := \begin{cases}
			0 &\wenn x\notin A\,,\\
			\frac{c}{|\loss|_1}\cdot \text{sign}(f(x)) &\wenn x\in A\,.
		\end{cases}
	\end{align*}
	Then, $\Px(g\ne0)>0$ and
	\begin{align}\label{eq:ProofLem_Consistency_Proofs_AuxRiskFinite_Risk}
		\riskshift(g) &= \int_A \innerriskshiftbed(g(x)) \diff\Px(x) + \underbrace{\int_{\X\setminus A} \innerriskshiftbed(g(x)) \diff\Px(x)}_{=0}\,.
	\end{align}
	All that remains to investigate is the first integral on the right hand side. For all $x\in A$, we know that
	\begin{align*}
		\left|\innerriskshiftbed(g(x))\right| \le \int \left|\loss(x,y,g(x)) - \loss(x,y,0)\right| \diff\Pbed[y]{x} \le |\loss|_1 \cdot |g(x)| = c
	\end{align*}
	and
	\begin{align*}
		\innerriskshiftbed(g(x)) \le \max\left\{\innerriskshiftbed(0), \innerriskshiftbed(f(x)) \right\} = \innerriskshiftbed(0) = 0
	\end{align*}
	because $g(x)$ lies between 0 and $f(x)$, $\innerriskshiftbed(f(x))<0$ by definition of $A$, and \innerriskshiftbed is convex (which follows from \loss being convex). 
	
	Plugging this into the right hand side of \eqref{eq:ProofLem_Consistency_Proofs_AuxRiskFinite_Risk} yields $\riskshift(g)\in[-c,0]$ and hence the assertion.
\end{proof}

\begin{proof}[Proof of \Cref{Prop:Consistency_Consistency_ShiftGegenbeispielRiskLp}]
	We prove the statement by providing a counterexample.
	
	Because of \loss being of lower growth type 1,
	\begin{align*}
		c_0:=\sup\{r\in[0,\infty)\,|\,\psi(r)=0\} 
	\end{align*}
	is finite, where $\psi$ denotes the representing function belonging to \loss, as introduced in \Cref{Def:Consistency_Pre_DistBasedLoss}. Because of \loss being convex, distance-based, and symmetric, we have 
	\begin{align}\label{eq:ProofProp_Consistency_Consistency_ShiftGegenbeispielRiskLp_c0}
		\loss(y,t)=\psi(y-t)=0 \qquad \Leftrightarrow \qquad y-t\in[-c_0,c_0]\,. 
	\end{align}
	Assume without loss of generality that $c_0\le \frac{1}{2}$ (else just scale the subsequent example accordingly).
	
	Choose $\X:=(0,1)$, $\Px:=\unif$ and
	\begin{align}\label{eq:ProofProp_Consistency_Consistency_ShiftGegenbeispielRiskLp_Pbed}
		\Pbed{X=x} := x\cdot\unif[-1,1] + \frac{1-x}{2} \cdot \big(\dirac[-a_x] + \dirac[a_x]\big)\qquad \forall\, x\in\X\,,
	\end{align}
	where \unif[a,b] denotes the uniform distribution on $(a,b)$, \dirac[z] denotes the Dirac distribution in $z\in\R$ and $a_x>1$ is a constant depending on $x$ (and on \loss) that we will specify later on.\footnote{For the sake of strictly adhering to the completeness assumption from \Cref{Ann:Consistency_Pre_AllgAnn}, we can also choose \X as $[0,1]$ or \R, and \Pbed{X=x} as an arbitrary probability measure for $x\notin(0,1)$ without changing anything else.} Further define
	\begin{align}\label{eq:ProofProp_Consistency_Consistency_ShiftGegenbeispielRiskLp_fn}
		\fn\colon \X\to\R\,,\qquad x\mapsto\begin{cases}
			n &\wenn x\in\left(0,\frac{1}{n}\right)\,,\\
			0 &\sonst\,,
		\end{cases}
	\end{align}
	for $n\in\N$. As \fn is bounded for all $n\in\N$, we obviously have $(\fn)_{n\in\N}\subseteq\Lapx$. We now show that this example also possesses the remaining properties mentioned in the proposition, which consists of three main steps:
	
	First, we show that \fshiftbayes is \Px-\fs unique, more specifically $\fshiftbayes\equiv 0$ \Px-\fs, and $\fshiftbayes\in\Lapx$:\\
	Choose $\fstar\equiv0$. We show that $\riskshift(\fstar)<\riskshift(f)$ for all measurable $f\colon\X\to\R$ satisfying $\Px(f\ne 0)>0$. As $\riskshift(\fstar)=0$, the case $\riskshift(f)=\infty$ is trivial. Furthermore, if there was an $f$ satisfying $\riskshift(f)=-\infty$ and thus contradicting our claim, there would by \Cref{Lem:Consistency_Consistency_AuxRiskFinite} (which is applicable by \Cref{Bem:Consistency_Consistency_Lipschitz}) also exist a measurable $g$ with $\Px(g\ne0)>0$ and $-\infty<\riskshift(g)\le0=\riskshift(\fstar)$, which would also contradict our claim. Hence, we can without loss of generality assume that $\riskshift(f)\in\R$.\\
	Since $\fstar\equiv0$, we have, for each $x\in\X$ and $y\ge 0$,  
	\begin{align}\label{eq:ProofProp_Consistency_Consistency_ShiftGegenbeispielRiskLp_IntegrNonNeg}
		\lossshift\left(-y,\fstar(x)\right) + \lossshift\left(y,\fstar(x)\right) &= 2\cdot \lossshift(y,0)\notag\\ 
		&= 2\cdot \lossshift\left(y,\,\frac{1}{2}\cdot\left(-f(x)\right)+\frac{1}{2}\cdot f(x)\right)\notag\\
		&\le \lossshift\left(y,-f(x)\right) + \lossshift\left(y,f(x)\right)\notag\\ 
		&= \lossshift\left(-y,f(x)\right) + \lossshift\left(y,f(x)\right)
	\end{align}
	because of \loss being distance-based, symmetric and convex.\\
	Furthermore, by the definition of $f$, there exists $\eps:=(\eps_1,\eps_2)$ with $\eps_1,\eps_2>0$ such that $\Px(\X_\eps)>0$, where $\X_\eps:=\{x\in\X\,:\,|f(x)|\ge\eps_1 \text{ and } x\ge\eps_2\}$. Now, specifically look at $x\in\X_\eps$ and $y\in[c_0,c_0+\min\{\frac{1}{2},\frac{|f(x)|}{4}\}]\subseteq [0,1]$. First, only consider such $x$ that satisfy $f(x)>0$. We then obtain that
	\begin{align}\label{eq:ProofProp_Consistency_Consistency_ShiftGegenbeispielRiskLp_Distances}
		\left|-y-f(x)\right| = y + f(x) \ge c_0 + f(x)\quad\text{ and }\quad \left|\pm y-\fstar(x)\right| = y \le c_0+\frac{f(x)}{4}\,,
	\end{align}
	and hence
	\begin{align*}
		\loss(-y,f(x))\ge 4 \cdot \loss(-y,\fstar(x)) = 2\cdot \Big( \loss\left(y,\fstar(x)\right) + \loss\left(-y,\fstar(x)\right) \Big)
	\end{align*}
	because of \eqref{eq:ProofProp_Consistency_Consistency_ShiftGegenbeispielRiskLp_c0} and the convexity, symmetry and distance-basedness of \loss. Thus,
	\begin{align*}
		&\Big( \lossshift\left(-y,f(x)\right) + \lossshift\left(y,f(x)\right) \Big) - \Big( \lossshift\left(-y,\fstar(x)\right) + \lossshift\left(y,\fstar(x)\right) \Big) \\
		&= \Big( \loss\left(-y,f(x)\right) + \loss\left(y,f(x)\right) \Big) - \Big( \loss\left(-y,\fstar(x)\right) + \loss\left(y,\fstar(x)\right) \Big)\\
		&\ge \frac{1}{2} \cdot \loss\left(-y,f(x)\right) = \frac{1}{2} \cdot \psi(|-y-f(x)|) \ge \frac{1}{2} \cdot \psi(c_0 + f(x))\,,
	\end{align*}
	where, in the last step, we again applied the convexity and symmetry of \loss, as well as \eqref{eq:ProofProp_Consistency_Consistency_ShiftGegenbeispielRiskLp_Distances}.\\
	By interchanging the roles of $y$ and $-y$ in the preceding paragraph, we obtain an analogous inequality for the case that $f(x)<0$. Combining these two cases yields that
	\begin{align}\label{eq:ProofProp_Consistency_Consistency_ShiftGegenbeispielRiskLp_IntegrPos}
		&\Big( \lossshift\left(-y,f(x)\right) + \lossshift\left(y,f(x)\right) \Big) - \Big( \lossshift\left(-y,\fstar(x)\right) + \lossshift\left(y,\fstar(x)\right) \Big)\notag\\ 
		&\ge \frac{1}{2} \cdot \psi(c_0 + |f(x)|)\,
	\end{align}
	for all $x\in\X_\eps$ and $y\in[c_0,c_0+\min\{\frac{1}{2},\frac{|f(x)|}{4}\}]\subseteq [0,1]$.\\
	Because $\riskshift(\fstar)=0\in\R$ by the definition of \fstar and $\riskshift(f)\in\R$ by assumption, our considerations yield
	\begin{align*}
		&\riskshift(f) - \riskshift(\fstar)\\ 
		&= \int_\X \int_\Y \lossshift\left(y,f(x)\right) - \lossshift\left(y,\fstar(x)\right) \diff\Pbed[y]{x}\diff\Px(x)\\
		&= \int_\X \int_{[0,\infty)} \Big( \lossshift\left(-y,f(x)\right) + \lossshift\left(y,f(x)\right) \Big)\\ 
		&\hspace*{4cm}- \Big( \lossshift\left(-y,\fstar(x)\right) + \lossshift\left(y,\fstar(x)\right) \Big) \diff\Pbed[y]{x}\diff\Px(x)\\
		&\overset{\eqref{eq:ProofProp_Consistency_Consistency_ShiftGegenbeispielRiskLp_IntegrNonNeg}, \eqref{eq:ProofProp_Consistency_Consistency_ShiftGegenbeispielRiskLp_IntegrPos}}{\ge} \int_{\X_\eps} \int_{[c_0,c_0+\min\{\frac{1}{2},\frac{|f(x)|}{4}\}]} \frac{1}{2} \cdot \psi(c_0+|f(x)|) \diff\Pbed[y]{x}\diff\Px(x)\\
		&= \int_{\X_\eps} \frac{x}{2} \cdot \min\left\{\frac{1}{2},\frac{|f(x)|}{4}\right\} \cdot \frac{1}{2} \cdot \psi(c_0+|f(x)|) \diff\Px(x)\\
		&\ge \Px(\X_\eps) \cdot \frac{\eps_2}{2} \cdot \min\left\{\frac{1}{2},\frac{\eps_1}{4}\right\} \cdot \frac{1}{2} \cdot \psi(c_0+\eps_1)\\
		&\overset{\eqref{eq:ProofProp_Consistency_Consistency_ShiftGegenbeispielRiskLp_c0}}{>} 0\,.
	\end{align*}
	In the second step, we multiplied the integrand by 2 for $y=0$, which does not change the value of the integral since $\Pbed[Y=0]{X=x}=0$ for all $x\in\X$. In the final steps, we additionally applied that \Pbed{X=x} has Lebesgue density $\frac{x}{2}$ on $[c_0,c_0+\min\{\frac{1}{2},\frac{|f(x)|}{4}\}]\subseteq [0,1]$, respectively the definition of $\X_\eps$.\\
	Hence, $\fshiftbayes\equiv 0$ \Px-\fs and thus also $\fshiftbayes\in\Lapx$.
	
	Next, we show that $\limn \riskshift(f_n) = \riskshiftbayes$:\\
	Recall the definition of $\fn$, $n\in\N$, from \eqref{eq:ProofProp_Consistency_Consistency_ShiftGegenbeispielRiskLp_fn}. For all $n\in\N$, we have $\fshiftbayes, \fn\in\Lapx$ and therefore $\riskshiftbayes=\riskshift(\fshiftbayes)\in\R$ and $\riskshift(\fn) \in\R$ by \eqref{eq:Consistency_Consistency_ShiftRiskFinite}. Hence, we can write
	\begin{align}\label{eq:ProofProp_Consistency_Consistency_ShiftGegenbeispielRiskLp_RiskDiff}
		&\riskshift(f_n) - \riskshiftbayes\notag\\ 
		&= \int_\X \int_\Y \lossshift\left(y,\fn(x)\right) - \lossshift\left(y,\fshiftbayes(x)\right) \diff\Pbed[y]{x}\diff\Px(x)\notag\\
		&= \int_\X \int_\Y \loss\left(y,\fn(x)\right) - \loss\left(y,\fshiftbayes(x)\right) \diff\Pbed[y]{x}\diff\Px(x)\notag\\
		&= \int_0^{1/n}  \int_{-1}^1 \frac{x}{2} \cdot \big( \loss\left(y,n\right) - \loss\left(y,0\right) \big) \diff y \diff x\notag\\ 
		&\hspace*{0.5cm}+ \int_0^{1/n} \frac{1-x}{2} \cdot \Big( \big( \loss\left(-a_x,n\right) + \loss\left(a_x,n\right) \big) - \big( \loss\left(-a_x,0\right) + \loss\left(a_x,0\right) \big) \Big) \diff x\,,
	\end{align}
	where we applied the definition of \fn, \fshiftbayes, and \P in the last step. We will now analyze the two integrals on the right hand side separately and show that they both converge to 0 as $n\to\infty$, starting with the first one:
	\begin{align*}
		&\left| \int_0^{1/n}  \int_{-1}^1 \frac{x}{2} \cdot \big( \loss\left(y,n\right) - \loss\left(y,0\right) \big) \diff y \diff x \right|\\ 
		&\le \int_0^{1/n}  \int_{-1}^1 \frac{x}{2} \cdot |\loss|_1 \cdot |n-0| \diff y \diff x = \frac{|\loss|_1}{2n} \xlongrightarrow{n\to\infty} 0 
	\end{align*}
	with \loss being Lipschitz continuous by \Cref{Bem:Consistency_Consistency_Lipschitz}.\\
	As for the second integral on the right hand side of \eqref{eq:ProofProp_Consistency_Consistency_ShiftGegenbeispielRiskLp_RiskDiff}:\\
	We take a look at the subdifferential $\partial\psi$ \citep[\vgl][Definition~1.9]{phelps1993} of the representing function $\psi$ of \loss. Because of the symmetry of \loss, we will without loss of generality only investigate $\partial\psi(r)$ for $r\in[0,\infty)$. Define 
	\begin{align*}
		z(r):=\sup\partial\psi(r) \in [0,\infty) \qquad \forall\, r\in[0,\infty)\,, 
	\end{align*}
	where $z(r)<\infty$ will follow from \eqref{eq:ProofProp_Consistency_Consistency_ShiftGegenbeispielRiskLp_cLTilde} and $z(r)\ge0$ follows from \loss being monotonically increasing on $[0,\infty)$ because of it being distance-based and convex. Furthermore, let $c_\loss$ be the constant from the definition of the upper growth type 1 of \loss, that is
	\begin{align*}
		\psi(r) \le c_\loss \cdot (|r|+1)  \qquad \forall\, r\in\R\,.
	\end{align*}
	Assume there was an $r_0\in[0,\infty)$ such that $z(r_0)>c_\loss$. Then, by the definition of the subdifferential, we would obtain
	\begin{align*}
		c_\loss \cdot (r+1) \ge \psi(r) \ge \psi(r_0) + z(r_0) \cdot (r-r_0) \qquad \forall\, r\in [0,\infty)
	\end{align*}
	and hence
	\begin{align*}
		r \le \frac{\psi(r_0)-z(r_0) r_0 - c_\loss}{c_\loss - z(r_0)} \qquad \forall\, r\in [0,\infty)\,,
	\end{align*}
	which is a contradiction because the right hand side is a constant in \R that is independent of $r$. Hence, $z$ is bounded by $c_\loss$. Because of \loss additionally being monotonically increasing on $[0,\infty)$, we obtain that
	\begin{align}\label{eq:ProofProp_Consistency_Consistency_ShiftGegenbeispielRiskLp_cLTilde}
		\tilde{c}_\loss := \lim_{r\to\infty} z(r) = \sup_{r\in[0,\infty)} z(r) \le c_\loss 
	\end{align}
	exists. \\
	We can therefore, for each $x\in(0,1)$, choose $r_x\in [0,\infty)$ such that
	\begin{align}\label{eq:ProofProp_Consistency_Consistency_ShiftGegenbeispielRiskLp_AbschDiff}
		0 \le \tilde{c}_\loss - z(r_x) \le x\,
	\end{align}
	and
	\begin{align}\label{eq:ProofProp_Consistency_Consistency_ShiftGegenbeispielRiskLp_ApproxLinear}
		\psi(r_x) + z(r_x) \cdot (r-r_x) \le \psi(r) \le \psi(r_x) + \tilde{c}_\loss \cdot (r-r_x) \qquad \forall\, r\in[r_x,\infty)\,.
	\end{align}
	Now choose $a_x$ in the definition of \Pbed{X=x} in \eqref{eq:ProofProp_Consistency_Consistency_ShiftGegenbeispielRiskLp_Pbed} as $a_x := r_x + \frac{1}{x}$ for all $x\in (0,1)$. Please note that $a_x>1$ for all $x\in (0,1)$. We obtain
	\begin{align*}
		&\loss(-a_x,n) + \loss(a_x,n) \\
		&= \psi\left(|-a_x-n|\right) + \psi\left(|a_x-n|\right)\\ 
		&= \psi\left(r_x+\frac{1}{x}+n\right) + \psi\left(r_x+\frac{1}{x}-n\right)\\
		&\in \left[ 2\cdot\psi(r_x) + z(r_x) \cdot \left( \frac{1}{x} + n + \frac{1}{x} - n \right) \,,\, 2\cdot\psi(r_x) + \tilde{c}_\loss \cdot \left( \frac{1}{x} + n + \frac{1}{x} - n \right) \right]\\
		&= \left[ 2 \cdot \left( \psi(r_x) + \frac{z(r_x)}{x} \right)  \,,\,  2 \cdot \left( \psi(r_x) + \frac{\tilde{c}_\loss}{x} \right)  \right] \qquad\qquad\forall\, n\in\N \,,\, x\in\Big(0,\frac{1}{n}\Big)\,,
	\end{align*}
	where we applied the symmetry of \loss as well as \eqref{eq:ProofProp_Consistency_Consistency_ShiftGegenbeispielRiskLp_ApproxLinear} combined with the fact that $\frac{1}{x}+n\ge0$ and $\frac{1}{x}-n\ge0$. Analogously, we obtain
	\begin{align*}
		&\loss(-a_x,0) + \loss(a_x,0) \\
		&= 2\cdot\psi\left(r_x+\frac{1}{x}\right)\\ 
		&\in \left[ 2 \cdot \left( \psi(r_x) + \frac{z(r_x)}{x} \right)  \,,\,  2 \cdot \left( \psi(r_x) + \frac{\tilde{c}_\loss}{x} \right)  \right] \qquad\forall\, x\in\Big(0,\frac{1}{n}\Big)\,.
	\end{align*}\\
	Plugging these results into the second integral on the right hand side of \eqref{eq:ProofProp_Consistency_Consistency_ShiftGegenbeispielRiskLp_RiskDiff} finally yields
	\begin{align*}
		&\left| \int_0^{1/n} \frac{1-x}{2} \cdot \Big( \big( \loss\left(-a_x,n\right) + \loss\left(a_x,n\right) \big) - \big( \loss\left(-a_x,0\right) + \loss\left(a_x,0\right) \big) \Big) \diff x \right| \\
		&\le \int_0^{1/n} \frac{1-x}{2} \cdot \left(  2 \cdot \left( \psi(r_x) + \frac{\tilde{c}_\loss}{x} \right) - 2 \cdot \left( \psi(r_x) + \frac{z(r_x)}{x} \right) \right) \diff x\\
		&= \int_0^{1/n} \frac{1-x}{2} \cdot \frac{2}{x} \cdot \left( \tilde{c}_\loss - z(r_x) \right) \diff x\\
		&\overset{\eqref{eq:ProofProp_Consistency_Consistency_ShiftGegenbeispielRiskLp_AbschDiff}}{\le} \int_0^{1/n} (1-x) \diff x = \frac{1}{n} - \frac{1}{2n^2} \xlongrightarrow{n\to\infty} 0\,,
	\end{align*}
	and thus $\limn \riskshift(f_n) = \riskshiftbayes$.
	
	Finally and as a last step, we have to show that $\limn \normLapx{\fn-\fshiftbayes} \ne 0$:
	\begin{align*}
		\limn \normLapx{\fn-\fshiftbayes} = \limn \int_0^{1/n} |n-0| \diff x = \limn 1 \ne 0\,.
	\end{align*}\qedhere
\end{proof}

\begin{proof}[Proof of \Cref{Prop:Consistency_Consistency_ShiftGegenbeispielPinRiskLp}]
	Similarly to \Cref{Prop:Consistency_Consistency_ShiftGegenbeispielRiskLp}, we prove the statement by providing a counterexample:
	
	Choose $\X:=(0,1)$, $\Y:=\R$, $\Px:=\unif$, and 
	\begin{align*}
		\Pbed{X=x} =&\  x \cdot \Big( \tau \cdot \unif[(-1,0)] + (1-\tau) \cdot \unif[(0,1)] \Big)\\ 
		&\ + (1-x) \cdot \Big( \tau \cdot \dirac[-1/x] + (1-\tau) \cdot \dirac[1/x] \Big) \qquad \forall\, x\in\X\,,
	\end{align*}
	where $\unif[(a,b)]$ denotes the uniform distribution on $(a,b)$ and $\dirac[z]$ denotes the Dirac distribution in $z\in\R$.\footnote{For the sake of strictly adhering to the completeness assumption from \Cref{Ann:Consistency_Pre_AllgAnn}, we can also choose \X as $[0,1]$ or $\R$, and $\Pbed{X=x}$ as an arbitrary probability measure for $x\notin(0,1)$ without changing anything else.} From this definition, we immediately obtain that $\ftaubayes\equiv0\in\Lapx$.
	
	Further define
	\begin{align*}
		\fn\colon \X\to\R\,,\qquad x\mapsto\begin{cases}
			n &\wenn x\in\left(0,\frac{1}{n}\right)\,,\\
			0 &\sonst\,,
		\end{cases}
	\end{align*}
	for all $n\in\N$. As \fn is bounded for all $n\in\N$, we obviously have $(\fn)_{n\in\N}\subseteq\Lapx$.
	
	Because of the occurring risks both being finite, \vgl \eqref{eq:Consistency_Consistency_ShiftRiskFinite}, and $\risk[\losspinshift,\P]^*=\risk[\losspinshift,\P](\ftaubayes)$, \vgl \eqref{eq:Consistency_Consistency_PinballBayesRisk}, we can for all $n\in\N$ write
	\begin{align}\label{eq:ProofProp_Consistency_Consistency_ShiftGegenbeispielPinRiskLp_Riskdiff}
		&\risk[\losspinshift,\P](f_n) - \risk[\losspinshift,\P]^*\notag\\
		&= \int_{(0,1)} \int_\R \losspinshift(y,\fn(x)) - \losspinshift(y,\ftaubayes(x)) \diff\Pbed[y]{x} \diff\Px(x)\,.
	\end{align}
	For \Px-almost all $x\in\X$, we can now further analyze the inner integral, applying that $\fn(x)\ge\ftaubayes(x)$, by 
	\begin{align}\label{eq:ProofProp_Consistency_Consistency_ShiftGegenbeispielPinRiskLp_InneRiskdiff}
		&\int_\R \losspinshift(y,\fn(x)) - \losspinshift(y,\ftaubayes(x)) \diff\Pbed[y]{x} \notag\\
		&=\int_\R \losspin(y,\fn(x)) - \losspin(y,\ftaubayes(x)) \diff\Pbed[y]{x} \notag\\
		&= \int_{\left(-\infty,\ftaubayes(x)\right)} (1-\tau) \cdot \left(\fn(x)-\ftaubayes(x)\right) \diff\Pbed[y]{x}\notag\\ 
		&\hspace*{0.5cm}+ \int_{\left[\ftaubayes(x),\fn(x)\right)} (-\tau) \cdot \left(\fn(x)-\ftaubayes(x)\right) + \left(\fn(x)-y\right) \diff\Pbed[y]{x}\notag\\
		&\hspace*{0.5cm}+ \int_{\left[\fn(x),\infty\right)} (-\tau) \cdot \left(\fn(x)-\ftaubayes(x)\right) \diff\Pbed[y]{x} \notag\\
		&= \int_{\left[\ftaubayes(x),\fn(x)\right)} \left(\fn(x)-y\right) \diff\Pbed[y]{x}\,.
	\end{align}
	In the last step, we employed that, for \Px-almost all $x\in\X$, we know from the definition of $\P$ that $\P(\{\ftaubayes(x)\}\,|\,x)=0$ and therefore $\P((-\infty,\ftaubayes(x))\,|\,x)=\tau$ and $\P([\ftaubayes(x),\infty)\,|\,x)=1-\tau$ by the definition of \ftaubayes.
	
	Plugging \eqref{eq:ProofProp_Consistency_Consistency_ShiftGegenbeispielPinRiskLp_InneRiskdiff} and the definition of \fn and \ftaubayes into \eqref{eq:ProofProp_Consistency_Consistency_ShiftGegenbeispielPinRiskLp_Riskdiff}, we obtain
	\begin{align*}
		\risk[\losspinshift,\P](f_n) - \risk[\losspinshift,\P]^*\
		&= \int_{\left(0,\frac{1}{n}\right)} \int_{[0,n)} \left(n-y\right) \diff\Pbed[y]{x} \diff\Px(x)\\
		&= \int_0^{\frac{1}{n}} \int_0^1 (n-y) \cdot x \cdot (1-\tau) \diff y \diff x\\
		&= (1-\tau) \cdot \frac{2n-1}{4n^2} \to 0\,,\qquad n\to\infty\,.
	\end{align*}
	
	On the other hand,
	\begin{align*}
		\normLapx{\fn-\ftaubayes} = \int_{0}^{\frac{1}{n}} |n-0| \diff x = 1 \not\to 0\,,\qquad n\to\infty\,,
	\end{align*}
	which completes the proof.
\end{proof}

\begin{proof}[Proof of \Cref{Cor:Consistency_Consistency_ShiftGegenbeispielSobolevRiskLp}]
	The assertion follows directly from the proof of \Cref{Prop:Consistency_Consistency_ShiftGegenbeispielRiskLp} respectively \Cref{Prop:Consistency_Consistency_ShiftGegenbeispielPinRiskLp} by changing the functions \fn, $n\in\N$, to 
	\begin{align*}
		\fn\colon \X\to\R\,,\qquad x\mapsto\begin{cases}
			n\cdot (1-nx)^m &\wenn x\in\left(0,\frac{1}{n}\right)\,,\\
			0 &\sonst\,.
		\end{cases}
	\end{align*}
	Since, for all $n\in\N$, \fn is bounded and $m$ times weakly differentiable, we obtain $(\fn)_{n\in\N}\subseteq\sobolevX[m,\infty]\cap\Lapx\subseteq\sobolevX\cap\Lapx$.\footnote{If \X is not chosen as $(0,1)$ but instead as $[0,1]$ or $\R$ in the proofs of \Cref{Prop:Consistency_Consistency_ShiftGegenbeispielRiskLp} and \Cref{Prop:Consistency_Consistency_ShiftGegenbeispielPinRiskLp}, it is obviously possible to extend the functions $\fn$, $n\in\N$, in such a way that they are still in $\sobolevX[m,\infty]\cap\Lapx$.}

	If we denote the functions from the mentioned proofs by $g_n$, $n\in\N$, we have $\fshiftbayes(x) \le \fn(x) \le g_n(x)$ for $\Px$-almost all $x\in\X$ because $\fshiftbayes=0$ \Px-\fs (with $\fshiftbayes=\ftaubayes$ \Px-\fs in the situation of $\lossshift=\losspinshift$ by the considerations prior to \Cref{Prop:Consistency_Consistency_ShiftGegenbeispielPinRiskLp}). 
	It is easy to see that the convexity of \loss and the definition of \fshiftbayes as a minimizer of \riskshift therefore implies $\riskshift(\fn)-\riskshiftbayes \le \riskshift(g_n)-\riskshiftbayes$, which then yields $\limn \riskshift(\fn)=\riskshiftbayes$.
	
	At the same time, we obtain
	\begin{align*}
		\normLapx{\fn-\fshiftbayes} = \int_{0}^{1/n} |n\cdot (1-nx)^m-0| \diff x = \frac{1}{m+1} \not\to 0\,,\qquad n\to\infty\,,
	\end{align*}
	which completes the proof.
\end{proof}

\begin{proof}[Proof of \Cref{Thm:Consistency_Consistency_ShiftPinRiskLa}]
	By \eqref{eq:Consistency_Consistency_ShiftRiskFinite}, both $\risk[\losspinshift,\P](\fn)$, $n\in\N$, and $\risk[\losspinshift,\P](\ftaubayes)$ are finite.
	
	If condition (i) is satisfied, we further obtain as in \Cref{Bem:Consistency_Consistency_RiskLpsAlternativbedingungen} that $\risk[\losspin,\P](0)$ and $\risk[\losspin,\P](\fn)$, for $n\in\N$, are finite, and therefore also $\riskbayes[\losspin,\P]$. As $\riskbayes[\losspin,\P]=\risk[\losspin,\P](\ftaubayes)$ and $\riskbayes[\losspinshift,\P]=\risk[\losspinshift,\P](\ftaubayes)$ by \eqref{eq:Consistency_Consistency_PinballBayesRisk} and \eqref{eq:Consistency_Consistency_ShiftPinballBayesRisk}, we hence obtain
	\begin{align*}
		\risk[\losspin,\P](\fn) = \risk[\losspinshift,\P](\fn) + \risk[\losspin,\P](0) \qquad \forall\,n\in\N
	\end{align*}
	and 
	\begin{align*}
		\riskbayes[\losspin,\P] &= \risk[\losspin,\P](\ftaubayes)\\ 
		&= \risk[\losspinshift,\P](\ftaubayes) + \risk[\losspin,\P](0) = \riskbayes[\losspinshift,\P] + \risk[\losspin,\P](0)\,.
	\end{align*}
	\Cref{Thm:Consistency_Consistency_RiskLp} and \Cref{Bem:Consistency_Consistency_RiskLpsAlternativbedingungen} then yield the assertion because of \losspin being of growth type 1. Thus, it is only left to show that condition (ii) yields the assertion as well:
	
	Because of the finiteness of $\risk[\losspinshift,\P](\fn)$, $n\in\N$, and $\risk[\losspinshift,\P](\ftaubayes)$, the assumed risk consistency implies that the \P-integral of $\losspinshift(y,\fn(x))-\losspinshift(y,\ftaubayes(x))$ converges to 0 as $n\to\infty$. We will now begin by fixing an $x\in\X$ and further analyzing the inner integral with respect to $\Pbed{x}$:
	
	First, we look at the case that $\fn(x)\ge \ftaubayes(x)$. In this case, repeating the considerations from \eqref{eq:ProofProp_Consistency_Consistency_ShiftGegenbeispielPinRiskLp_InneRiskdiff}, where we can apply \eqref{eq:Thm_Consistency_Consistency_ShiftPinRiskLa_WMasse0} in the last step, yields for \Px-almost all such $x$ that
	\begin{align*}
		&\int_\Y \losspinshift(y,\fn(x)) -\losspinshift(y,\ftaubayes(x))\diff\P(y|x)\notag\\
		&\overset{\eqref{eq:ProofProp_Consistency_Consistency_ShiftGegenbeispielPinRiskLp_InneRiskdiff}}{=} \int_{\left[\ftaubayes(x),f_n(x)\right)}(f_n(x)-y)\diff\P(y|x)\notag\\
		&\ge \int_{\left[\ftaubayes(x),\frac{f_n(x)+\ftaubayes(x)}{2}\right)}(f_n(x)-y)\diff\P(y|x)\notag\\
		&\ge \left(f_n(x)-\frac{f_n(x)+\ftaubayes(x)}{2}\right) \cdot \P\left(\left.\left(\ftaubayes(x),\frac{f_n(x)+\ftaubayes(x)}{2}\right)\right|x\right)\notag\\
		&= \frac{f_n(x)-\ftaubayes(x)}{2} \cdot \P\left(\left.\left(\ftaubayes(x),\frac{f_n(x)+\ftaubayes(x)}{2}\right)\right|x\right)\,.
	\end{align*}
	
	If on the other hand $f_n(x)<\ftaubayes(x)$, we analogously obtain for \Px-almost all such $x$:
	\begin{align*}
		&\int_\Y \losspinshift(y,\fn(x)) -\losspinshift(y,\ftaubayes(x)) \diff\P(y|x)\\ 
		&\ge \frac{\ftaubayes(x)-f_n(x)}{2} \cdot \P\left(\left.\left(\frac{f_n(x)+\ftaubayes(x)}{2},\ftaubayes(x)\right)\right|x\right)\,.
	\end{align*}
	
	In summary,
	\begin{align}\label{eq:ProofThm_Consistency_Consistency_ShiftPinRiskLa_AbschaetzungZsmf}
		&\int_\Y \losspinshift(y,\fn(x)) -\losspinshift(y,\ftaubayes(x))\diff\P(y|X) \ge \frac{|f_n(X)-\ftaubayes(X)|}{2} \cdot \P\left(\left.J_{X,n}\right|X\right)\,
	\end{align}
	\Px-\fs, where $J_{x,n}:=\left(\min\left\{\ftaubayes(x),\frac{f_n(x)+\ftaubayes(x)}{2}\right\},\max\left\{\ftaubayes(x),\frac{f_n(x)+\ftaubayes(x)}{2}\right\}\right)$ for all $x\in\X$.
	
	Additionally, \citet[Corollary 31]{christmann2009} yields $f_n\xlongrightarrow{\Px}\ftaubayes$, \ie
	\begin{equation}\label{eq:ProofThm_Consistency_Consistency_ShiftPinRiskLa_KonvInW}
		\limn \Px(|f_n(X)-\ftaubayes(X)|>\varepsilon) = 0 \qquad \forall \eps>0\,.
	\end{equation}
	Now, let $\eps>0$ be an arbitrary positive number (without loss of generality $\eps<2 c_1$). \X can be partitioned as $\X=\bigcupdot_{i=1}^{3}\X_{i,\eps}$, where
	\begin{align*}
		&\X_{1,\eps}:=\left\{x\in\X: |f_n(x)-\ftaubayes(x)|\le\eps\right\},\notag\\ &\X_{2,\eps}:=\left\{x\in\X: \eps<|f_n(x)-\ftaubayes(x)|\le 2\cdot c_1\right\},\notag\\
		&\X_{3,\eps}:= \X_3 := \left\{x\in\X: |f_n(x)-\ftaubayes(x)|> 2\cdot c_1\right\},
	\end{align*}
	such that
	\begin{align}\label{eq:ProofThm_Consistency_Consistency_ShiftPinRiskLa_L1Norm}
		||f_n-\ftaubayes||_{\Lapx} = \sum_{i=1}^{3} \int_{\X_{i,\eps}} |f_n(x)-\ftaubayes(x)|\diff\Px(x)\,.
	\end{align}
	The three summands can now be analyzed separately:
	\begin{align*}
		&\int_{\X_{1,\eps}} |f_n(x)-\ftaubayes(x)|\diff\Px(x) \le \eps,\\
		&\int_{\X_{2,\eps}} |f_n(x)-\ftaubayes(x)|\diff\Px(x) \le 2\cdot c_1\cdot \Px(\X_{2,\eps}) \xlongrightarrow{\eqref{eq:ProofThm_Consistency_Consistency_ShiftPinRiskLa_KonvInW}} 0\,,\qquad n\to\infty\,,
	\end{align*}
	and
	\begin{align*}
		&\int_{\X_{3,\eps}} |f_n(x)-\ftaubayes(x)|\diff\Px(x)\notag\\ 
		&= \int_{\X_{3}} \left(\frac{|f_n(x)-\ftaubayes(x)|}{2}\cdot \P(J_{x,n}|x)\right)\cdot \frac{2}{\P(J_{x,n}|x)} \diff\Px(x)\notag\\
		&\overset{\eqref{eq:Thm_Consistency_Consistency_ShiftPinRiskLa_WMasseHerum},\eqref{eq:ProofThm_Consistency_Consistency_ShiftPinRiskLa_AbschaetzungZsmf}}{\le} \frac{2}{c_2} \cdot \int_{\X_{3}} \int_\Y \losspinshift(y,\fn(x)) -\losspinshift(y,\ftaubayes(x))\diff\P(y|x)\diff\Px(x)\\ 
		&\to 0\,,\qquad n\to\infty\,,
	\end{align*}
	with the last convergence holding true because
	\begin{align*}
		\int_\X \int_\Y \losspinshift(y,\fn(x)) -\losspinshift(y,\ftaubayes(x))\diff\P(y|x)\diff\Px(x)\to 0\,,\qquad n\to\infty\,,
	\end{align*}
	by assumption and 
	\begin{align*}
		\int_\Y \losspinshift(y,\fn(x)) -\losspinshift(y,\ftaubayes(x))\diff\P(y|X)\ge 0
	\end{align*}
	\Px-\fs by \eqref{eq:ProofThm_Consistency_Consistency_ShiftPinRiskLa_AbschaetzungZsmf}.
	
	Plugging these results into \eqref{eq:ProofThm_Consistency_Consistency_ShiftPinRiskLa_L1Norm} yields the assertion.
\end{proof}

\begin{proof}[Proof of \Cref{Thm:Consistency_Consistency_Shift_L1Risk}]
	We know from \eqref{eq:Consistency_Consistency_ShiftRiskFinite} that all risks appearing in this result are finite. \loss additionally being Lipschitz continuous (\vgl \Cref{Bem:Consistency_Consistency_Lipschitz}) yields
	\begin{align*}
		\left| \riskshift(\fn) - \riskshift(\fstar) \right| &\le \int \left| \lossshift(y,\fn(x)) - \lossshift(y,\fstar(x)) \right| \diff\P(x,y)\\ 
		&= \int \left| \loss(y,\fn(x)) - \loss(y,\fstar(x)) \right| \diff\P(x,y) \\
		&\le |\loss|_1 \cdot \int |\fn(x)-\fstar(x)|\diff\P(x,y)\\
		&= |\loss|_1 \cdot \normLapx{\fn-\fstar} \to 0\,\qquad n\to\infty\,.
	\end{align*}
\end{proof}

\subsection{Proofs for Section \ref{SubSec:Consistency_SVMs_Reg}}

\begin{proof}[Proof of \Cref{Thm:Consistency_SVMs_LpCons}]
	We can split up the difference, which we have to investigate, as
	\begin{align}\label{eq:ProofThm_Consistency_SVMs_LpCons_Aufteilung}
		\normLppx{\fempnn-\fbayes} &\le \normLppx{\fempnn-\ftheon} + \normLppx{\ftheon-\fbayes}\notag\\
		&\le \normSup{k} \normH{\fempnn-\ftheon} + \normLppx{\ftheon-\fbayes}\,
	\end{align}
	by \citet[Lemma~4.23]{steinwart2008}. We will now examine the two summands on the right hand side separately, starting with the first one:
	
	First, note that applying \citet[Lemma~4.23, equation~(5.4) and Lemma~2.38(i)]{steinwart2008} yields
	\begin{align}\label{eq:ProofThm_Consistency_SVMs_LpCons_AbschSup}
		\normSup{\ftheon} \le \normSup{\k} \cdot \normH{\ftheon} \le \normSup{\k} \cdot \risk(0)^{1/2} \cdot \lbn^{-1/2} \le c_{p,\loss,\P,\k} \cdot \lbn^{-1/2} 
	\end{align}
	for all $n\in\N$, with $c_{p,\loss,\P,\k}\in(0,\infty)$ denoting a constant depending only on $p$, \loss, \P and \k, but not on $\lbn$.
	
	We know from \citet[Corollary 5.11]{steinwart2008} that there exist functions $h_n\colon\XY\to\R$, $n\in\N$, such that 
	\begin{align}\label{eq:ProofThm_Consistency_SVMs_LpCons_DiffHNorm}
		\normH{\fempnn-\ftheon} \le \frac{1}{\lbn} \cdot \normH{\ew[\DVertn]{h_n\Phi}-\ew[\P]{h_n\Phi}} \qquad \forall \,n\in\N\,,
	\end{align}
	and, for $s:=p/(p-1)$,
	\begin{align}\label{eq:ProofThm_Consistency_SVMs_LpCons_hAbsch}
		\norm{\L[\P]{s}}{h_n} &\le 8^p \cdot c_\loss \cdot \left(1+ |\P|_p^{p-1} + \normSup{\ftheon}^{p-1} \right)\notag\\ 
		&\le 8^p \cdot  c_\loss \cdot \left(1+ |\P|_p^{p-1} + c_{p,\loss,\P,\k}^{p-1} \cdot \lbn^{-(p-1)/2} \right)\notag\\
		&\le \tilde{c}_{p,\loss,\P,\k} \cdot \lbn^{-(p-1)/2} \hspace*{4cm} \forall \,n\in\N\,,
	\end{align}
	where we employed \eqref{eq:ProofThm_Consistency_SVMs_LpCons_AbschSup} in the second and the boundedness of $(\lbn)_{n\in\N}$ in the third step, and where $c_\loss\in (0,\infty)$ and $\tilde{c}_{p,\loss,\P,\k} \in (0,\infty)$ denote constants depending only on $\loss$ respectively $p$, \loss, \P and \k.
	
	Now, we can apply \citet[Lemma 9.2]{steinwart2008} with $q:=p/(p-1)$ if $p>1$ and $q:=2$ if $p=1$, which leads to $q^*:=\min\{1/2,1-1/q\}=\min\{1/2,1/p\}=(p+1)/(2p^*)$, to the functions $h_n\Phi$, $n\in\N$: First of all, with the help of \eqref{eq:ProofThm_Consistency_SVMs_LpCons_hAbsch} we obtain
	\begin{align*}
		\norm{q}{h_n\Phi} := \left(\ew[\P]{\normH{h_n\Phi}^q}\right)^{1/q} \le \normSup{\k} \cdot \norm{\L[\P]{q}}{h_n} \le \normSup{\k} \cdot \tilde{c}_{p,\loss,\P,\k} \cdot \lbn^{-(p-1)/2} < \infty 
	\end{align*}
	for all $n\in\N$. We employed that, for all $(x,y)\in\XY$,
	\begin{align*}
		\normH{h_n(x,y)\Phi(x)}^q &= |h_n(x,y)|^q\cdot \normH{\Phi(x)}^q\notag\\ 
		&= |h_n(x,y)|^q \cdot \k(x,x)^{q/2} \le |h_n(x,y)|^q \cdot \normSup{\k}^q
	\end{align*}
	by the reproducing property \citep[\vgl for example][Definition~2.9]{schoelkopf2002}. Hence, we obtain for all $\eps>0$, by combining this Lemma 9.2 with \eqref{eq:ProofThm_Consistency_SVMs_LpCons_DiffHNorm},
	\begin{align*}
		&\P^n\left(\Dn\in(\XY)^n : \normH{\fempnn-\ftheon} \ge \eps\right) \\
		&\le \P^n\left(\Dn\in(\XY)^n : \normH{\ew[\DVertn]{h_n\Phi}-\ew[\P]{h_n\Phi}} \ge \lbn \cdot\eps\right)\\
		&\le c_q \cdot \left(\frac{\norm{q}{h_n\Phi}}{\lbn\eps n^{q^*}}\right)^q \le \hat{c}_{p,\loss,\P,\k} \cdot \left(\frac{1}{\lbn^{(p+1)/2}\eps n^{q^*}}\right)^q \to 0\,,\qquad n\to\infty\,,
	\end{align*}
	with $c_q\in(0,\infty)$ and $\hat{c}_{p,\loss,\P,\k}\in(0,\infty)$ denoting constants depending only on $q$ (that is, only on $p$) respectively $p$, \loss, \P and \k, and with the convergence in the last step holding true because 
	\begin{align*}
		\lbn^{(p+1)/2}n^{q^*} = \left(\lbn^{(p+1)/(2q^*)}n\right)^{q^*} = \left(\lbn^{p^*}n\right)^{q^*} \to \infty\,,\qquad n\to\infty\,,
	\end{align*}
	by the assumptions on $(\lbn)_{n\in\N}$. Thus, the first summand on the right hand side of \eqref{eq:ProofThm_Consistency_SVMs_LpCons_Aufteilung} converges to 0 in probability as $n\to\infty$.
	
	Now, we can turn our attention to the second summand: First of all, \citet[Lemma~2.38(i)]{steinwart2008} yields that \loss is a \P-integrable Nemitski loss of order $p$. Hence, we know from \citet[Theorem 5.31]{steinwart2008} that 
	\begin{align*}
		\riskoptH := \inf_{f\in\H}\risk(f) = \riskbayes\,,
	\end{align*}
	and \citet[Lemma 5.15]{steinwart2008} (with $\riskoptH=\riskbayes<\infty$ by \Cref{Bem:Consistency_Consistency_RiskLpsAlternativbedingungen}) then yields 
	\begin{align*}
		\limn \lbn \normH{\ftheon}^2 + \risk(\ftheon) - \riskbayes = 0
	\end{align*}
	because $\lbn\to 0$ as $n\to\infty$. Since $\lbn\normH{\ftheon}^2$ is non-negative and $\risk(\ftheon)\ge\riskbayes$ by the definition of \riskbayes, we obtain
	\begin{align*}
		\limn\risk(\ftheon) = \riskbayes\,.
	\end{align*}
	Hence, \Cref{Thm:Consistency_Consistency_RiskLp}, whose conditions are satisfied because of the considerations from \Cref{Bem:Consistency_Consistency_RiskLpsAlternativbedingungen}, yields convergence to 0 (as $n\to\infty$) of the second summand on the right hand side of \eqref{eq:ProofThm_Consistency_SVMs_LpCons_Aufteilung}, which completes the proof.
\end{proof}

\begin{proof}[Proof of \Cref{Cor:Consistency_SVMs_RiskCons}]
	The assertion follows directly from \Cref{Thm:Consistency_SVMs_LpCons} and \Cref{Thm:Consistency_Consistency_LpRisk}.
\end{proof}

\subsection{Proofs for Section \ref{SubSec:Consistency_SVMs_Shift}}

\begin{proof}[Proof of \Cref{Cor:Consistency_SVMs_ShiftGegenbeispielRKHS}]
	There exist different kernels whose RKHS is \sobolevX[2,2]. Examples of such kernels can be found in \citet{wu1995}, \citet[Chapter~7]{berlinet2004}, \citet[Theorem~1.11]{saitoh2016} among others. For this proof, we will however use the kernel $k_{1,1}$ defined by $k_{1,1}(x,x'):=\phi_{1,1}(||x-x'||_2)$ with $\phi_{1,1}$ as in \citet[Definition 9.11]{wendland2005}, that is $\phi_{1,1}(r)\propto (1-r)_+^3(3r+1)$ \citep[\vgl][Table 9.1]{wendland2005}. By \citet[Theorem 10.35]{wendland2005}, the RKHS of $k_{1,1}$ is indeed \sobolevX[2,2]. Additionally, $k_{1,1}$ is bounded by $\phi_{1,1}(0)<\infty$ and because of its continuity also measurable. 
	Applying \Cref{Cor:Consistency_Consistency_ShiftGegenbeispielSobolevRiskLp} yields the assertion.
\end{proof}

\begin{proof}[Proof of \Cref{Cor:Consistency_SVMs_ShiftGegenbeispielGauss}]
	Denote, for some $m\in\N$, the functions from the proof of \Cref{Cor:Consistency_Consistency_ShiftGegenbeispielSobolevRiskLp} by $g_n$, $n\in\N$. Because of $k_\gamma$ being universal, \vgl \citet[Definition 4.52 and Corollary 4.58]{steinwart2008}, and the functions $g_n$ being continuous, there exists a sequence $(\fn)_{n\in\N}\subseteq H_\gamma$ such that
	\begin{align*}
		\normSup{\fn-g_n} \le \frac{1}{n}
	\end{align*}
	for all $n\in\N$. 
	
	Since both $\fn$ and $g_n$ are bounded, we obtain from \eqref{eq:Consistency_Consistency_ShiftRiskFinite} that, for all $n\in\N$, $\riskshift(\fn)\in\R$ and $\riskshift(g_n)\in\R$. Hence,
	\begin{align*}
		&\left| \riskshift(\fn) - \riskshift(g_n) \right|
		\le \int_{\XY} \left| \lossshift(y,\fn(x)) - \lossshift(y,g_n(x)) \right| \diff\P(x,y)\\
		&= \int_{\XY} \left| \loss(y,\fn(x)) - \loss(y,g_n(x)) \right| \diff\P(x,y) \le |\loss|_1 \cdot \int_{\XY} \left|\fn(x) - g_n(x) \right| \diff\P(x,y)\\ 
		&\le |\loss|_1 \cdot \frac{1}{n} \to 0\,,\qquad n\to\infty\,.
	\end{align*}
	with \loss being Lipschitz continuous by \Cref{Bem:Consistency_Consistency_Lipschitz}.
	The risk consistency of $(g_n)_{n\in\N}$ shown in the proof of \Cref{Cor:Consistency_SVMs_ShiftGegenbeispielRKHS} then yields risk consistency of $(\fn)_{n\in\N}$.
	
	On the other hand,
	\begin{align*}
		\limn \normLapx{\fn-g_n} = \limn \int_\X \left| \fn(x) - g_n(x) \right| \diff\Px(x) \le \limn \frac{1}{n} = 0
	\end{align*}
	combined with 
	\begin{align*}
		\limn \normLapx{g_n-\fshiftbayes}=\frac{1}{m+1}\,,
	\end{align*}
	which is known from the proof of \Cref{Cor:Consistency_SVMs_ShiftGegenbeispielRKHS}, yields
	\begin{align*}
		\limn \normLapx{\fn-\fshiftbayes} \ge \limn \left( \normLapx{g_n-\fshiftbayes} - \normLapx{\fn-g_n} \right) = \frac{1}{m+1}
	\end{align*}
	and thus $(\fn)_{n\in\N}$ not being \La-consistent.
\end{proof}

\begin{proof}[Proof of \Cref{Cor:Consistency_SVMs_ShiftPinLaCons}]
	\citet[Theorem 8]{christmann2009} yields
	\begin{equation*}
		\limn \risk[\lossshift,\P](\ftaushiftempn) = \risk[\lossshift,\P](\ftaubayes)
	\end{equation*}
	in probability $\P^\infty$. The assertion follows directly from  \Cref{Thm:Consistency_Consistency_ShiftPinRiskLa}.
\end{proof}

\end{document}